\theoremstyle{definition}
\newtheorem{definition}{Definition}[section]
\newtheorem{theorem}{Theorem}[section]
\renewcommand{\abstract}[1]{
 \centerline{
 \begin{minipage}{0.7\linewidth}
 \hrule
 \vskip 0.1in
  \begin{center}
    {\bf Abstract}
  \end{center}
  #1
 \vskip 0.1in
 \hrule
 \end{minipage}}
 \vskip 0.3in}
\providecommand{\I}{\mathbf{I}}
\renewcommand{\r}{\mathbf{r}}
\renewcommand{\v}{\mathbf{v}}
\providecommand{\x}{\mathbf{x}}
\providecommand{\X}{\mathbf{X}}
\providecommand{\y}{\mathbf{y}}
\providecommand{\zero}{\mathbf{0}}
\providecommand{\lam}{\lambda}
\providecommand{\bb}{\boldsymbol{\beta}}
\providecommand{\bep}{\boldsymbol{\epsilon}}
\providecommand{\bt}{\boldsymbol{\theta}}
\providecommand{\abs}[1]{\left\lvert#1\right\rvert}
\providecommand{\norm}[1]{\left\lVert#1\right\rVert}
\providecommand{\|}{\Vert}
\newcommand{\argmin}{\operatornamewithlimits{argmin}}
\newcommand{\argmax}{\operatornamewithlimits{argmax}}
\title{Hybrid Safe-Strong Rules for Efficient Optimization in Lasso-Type Problems}
\author{Yaohui Zeng\\Department of Biostatistics\\University of Iowa
  \and
  Tianbao Yang\\Department of Computer Science\\University of Iowa
  \and
  Patrick Breheny\\Department of Biostatistics\\University of Iowa}
\date{\today}
\begin{document}

\maketitle

\abstract{The lasso model has been widely used for model selection in data mining, machine learning, and high-dimensional statistical analysis.
However, with the ultrahigh-dimensional, large-scale data sets now collected in many real-world applications, it is important to develop algorithms to solve the lasso that efficiently scale up to problems of this size.
Discarding features from certain steps of the algorithm is a powerful technique for increasing efficiency and addressing the Big Data challenge. In this paper, we propose a family of hybrid safe-strong rules (HSSR) which incorporate safe screening rules into the sequential strong rule (SSR) to remove unnecessary computational burden. In particular, we present two instances of HSSR, namely SSR-Dome and SSR-BEDPP, for the standard lasso problem. We further extend SSR-BEDPP to the elastic net and group lasso problems to demonstrate the generalizability of the hybrid screening idea.
Extensive numerical experiments with synthetic and real data sets are conducted for both the standard lasso and the group lasso problems. Results show that our proposed hybrid rules can substantially outperform existing state-of-the-art rules.}

\section{Introduction} \label{sect_intro}

The lasso model \citep{Tibshirani1996} is widely used in data mining, machine learning, and high-dimensional statistics. The model is defined as the following optimization problem
\begin{align} \label{linear_lasso}
\widehat{\bb}(\lambda) = \argmin_{\bb \in \mathbb{R}^p} \frac{1}{2n} \norm{\y - \X \bb}_2^2 + \lambda \norm{\bb}_1,
\end{align}
where $\y$ is the $n \times 1$ response vector, $\X = (\x_1, \ldots, \x_p)$ is the $n \times p$ feature matrix, $\bb \in \mathbb{R}^p$ is the coefficient vector, and $\lambda \geq 0$ is a regularization parameter. $\| \cdot \|$ and $\|\cdot\|_1$ respectively denote the Euclidean ($\ell_2$) norm and $\ell_1$ norm.

Due to its property of automatic feature selection, the lasso model has attracted extensive studies with a wide range of successful applications to many areas, such as signal processing \citep{angelosante2009rls}, gene expression data analysis \citep{huang2003linear}, face recognition \citep{wright2009robust}, text mining \citep{li2015relevance} and so on. 
%The main reason of the success of the lasso is that, with the $\ell_1$ penalty lasso enforces sparse solutions in the sense that some elements of the vector of the coefficient estimates are set to be exactly 0. This property allows for automatic feature selection, and has made lasso very appealing in statistical and machine learning practice. 
Efficiently solving the lasso model is therefore of great significance to statistical and machine learning practice.

%Over the past years substantial research efforts have been devoted to develop efficient algorithms for solving the lasso. Among them are the LARS \citep{efron2004least}, interior-point method  \citep{kim2007interior}, homotopy method \citep{ garrigues2009homotopy}, ADMM algorithm \citep{boyd2011distributed}, coordinate descent algorithms \citep{friedman2007pathwise, wu2008coordinate}, and stochastic methods~\citep{shalev2011stochastic, defazio2014saga}. It's worth noting that coordinate descent algorithms are very simple, fast, and able to take advantage of the sparsity structure of the lasso model, making them very suitable and efficient to scale up to very high-dimensional lasso problems. 
%{\color{red} Q: reduce? remove? this paragraph }

Over the past years a number of efficient algorithms have been developed for solving the lasso~\citep{efron2004least, kim2007interior, garrigues2009homotopy, boyd2011distributed, wu2008coordinate, friedman2007pathwise, shalev2011stochastic}. Among them the pathwise coordinate descent algorithm~\citep{friedman2007pathwise} is simple, fast, and able to make use of the sparsity structure of the lasso and ``warm start'' strategy, making it very suitable and efficient to scale up to high-dimensional lasso problems~\citep{friedman2010regularization}. With the evolving era of Big Data, however, it is increasingly common to encounter large-scale, ultrahigh-dimensional data sets.  The increased number of features and observations in these data sets present added challenges to solving the lasso efficiently.

One idea for reducing computation time is drop certain features from the analysis prior to fitting the lasso.
As a result, the dimensionality of the feature matrix -- and hence the computational burden of the optimization -- will be substantially reduced.
This idea, known as \textit{feature screening}, has been around for a long time, but was first studied formally by \citet{fan2008sure}.
who studied the asymptotic properties of screening out features that have weak correlations with the response variable.
However, feature screening, which is usually based on the marginal relationship between a feature and the outcome, can incorrectly screen out important features and does not, therefore, solve the original optimization problem \eqref{linear_lasso}.

To avoid this problem, other researchers sought to develop {\em safe} rules that are guaranteed not to discard any active features.  These rules are usually based on exploiting geometric properties of the dual formulation of the lasso problem.  Their main idea is to bound the dual optimal solution $\widehat{\bt}(\lambda)$ of the lasso (formally defined in Section~\ref{sect_safe}) within a compact region $\boldsymbol \Theta$. Then given a feature $\x_j$, its coefficient estimate $\widehat{\beta}_j$ is guaranteed to be 0 if $\sup_{\bt \in \boldsymbol \Theta} |\x_j^T \bt| < \lambda$. This assertion is implied by the KKT condition: $|\x_j^T \bt(\lambda) | < \lambda \Rightarrow \beta_j = 0$ \citep{boyd2004convex}. The pioneering work in this direction is the SAFE rule developed by El Ghaoui et al. \citep{ghaoui2010safe}. The smaller the region $\boldsymbol \Theta$, the more features will be discarded and more efficiency gained; this has motivated other more powerful rules such as the EDPP rules \citep{JMLR:v16:wang15a}, the Dome test \citep{xiang2012fast}, and the Sphere tests \citep{xiang2011learning, Xiang2016lassoScreen}, which shrink $\boldsymbol \Theta$ according to different strategies.

A separate line of research has sought to develop ``strong'' rules that are more powerful at discarding features than safe rules and for which violations are unlikely, but possible.  This idea was initially proposed by \citet{tibshirani2012strong}, who developed \textit{sequential strong rules} (SSR) based upon the Karush-Kuhn-Tucker (KKT) conditions for the lasso problem along with an assumption of ``unit-slope'' bound.
The main idea is that we are still solving the original optimization problem, but we can skip certain calculations that are likely to be unnecessary, thereby reducing computational burden.
However, because it is possible for these rules to incorrectly discard active features, a post-convergence KKT checking step is required in order to guarantee the correctness of the solution.

In this paper, we propose combining safe and strong rules, yielding \textit{hybrid safe-strong rules} (HSSR) for discarding features in lasso-type problems. The key of HSSR is to incorporate \textit{simple yet safe} rules into SSR so as to remove a large amount of unnecessary post-convergence KKT checking on features that can be eliminated by safe rules. As a result, this paper will demonstrate that the total computing time for solving the lasso using these hybrid rules is substantially reduced compared to using either safe or strong rules alone.  Furthermore, the idea of HSSR provides a rather general feature screening framework since (i) in principle any safe rule can be combined with SSR, resulting in a more powerful rule; and (ii) HSSR can be easily extended to other lasso-type problems, either with different loss functions or  different regularization terms. In this paper we focus on three types of lasso problems with quadratic loss, namely, the standard lasso, the group lasso, and the elastic net.

Although this idea is relatively simple, we consider it to be novel for two primary reasons.  First, the existing literature is firmly divided and for the most part published in entirely different types of journals: most of the research on safe rules has appeared in machine learning and computer science journals, while the research on strong rules has appeared in statistics journals.  Most of what has been written gives the impression that these are two irreconcilable and mutually exclusive approaches to improving efficiency.  We show here that this is not the case -- the two types of rules can be combined in a relatively straightforward manner.  Second, the degree of efficiency gained by combining these rules is rather surprising, at least to us.  In many cases, the hybrid rules are more than the sum of their parts, providing much greater gains in efficiency when combined than using either type of rule alone.

The main contributions of this research include:
\begin{enumerate}
\item We propose a novel optimization framework for lasso screening that combines SSR with simple safe rules, resulting a family of hybrid safe-strong rules (HSSR) that are more efficient and scalable to large-scale data sets.
\item We develop two instances of HSSR, namely SSR-Dome and SSR-BEDPP, for feature screening in solving the lasso.
\item We extend SSR-BEDPP to two other lasso-type problems, the elastic-net \citep{zou2005regularization} and group lasso \citep{Yuan2006} to demonstrate the generalizability of the hybrid screening idea.
\item We evaluate the performance of newly proposed screening rules by extensive numerical experiments on both synthetic and real data sets, and show that our rules substantially outperform state-of-the-art ones.
\item We implement all screening rules in this paper in two publicly accessible R packages. Specifically, the rules for the standard lasso and elastic net are implemented in R package \texttt{biglasso}\footnote{\url{https://CRAN.R-project.org/package=biglasso}} \citep{zeng2017}, which aims to extend lasso model fitting to big data in R. The package \texttt{grpreg}\footnote{\url{https://CRAN.R-project.org/package=grpreg}}  \citep{Breheny2015} implements screening rules for the group lasso. The underlying optimization algorithm and screening rules in the R packages are implemented in C/C++ for fast computation.
\end{enumerate}

In this paper we assume without loss of generality that the response vector $\y$ is centered so that the intercept term is dropped from the lasso model. We further assume the feature vectors $\{\x_j\}_{j=1}^p$ are centered and standardized to have unit variance:
\begin{align} \label{stand}
\sum_{i=1}^n y_i = 0, \quad \sum_{i=1}^n x_{ij} = 0, \quad \frac{1}{n}\sum_{i=1}^n x_{ij}^2 = 1
\end{align}
for $j = 1, \ldots, p$.

Standardization is a typical preprocessing step in fitting lasso models since: (1) it ensures that the penalty is applied uniformly across features with different scales of measurement; (2) it often contributes to faster convergence of the optimization algorithm; (3) as we will see in following sections, it simplifies feature screening rules and thus reduces computation complexity.

The rest of the paper is organized as follows. Section \ref{sect_related} reviews the two categories, strong rules and safe rules, upon which our work is built. We propose our new hybrid screening strategy in Section \ref{sect_HSSR} and describe two powerful rules, SSR-BEDPP and SSR-Dome, based on this strategy along with a pathwise coordinate descent algorithm to take advantage of them.  In addition, this section analyzes the computational complexity of the HSSR rules and compares them to SSR and EDPP.  In Section \ref{sect_ext}, we extend SSR-BEDPP to the elastic net and group lasso problems. Section \ref{sect_exp} compares the performance of our rules with existing ones via extensive numerical experiments on synthetic and real data sets for both the standard lasso and the group lasso problems and conclude with some final remarks in Section \ref{sect_conc}.  Proofs of theorems are given in the Appendix.

\section{Existing lasso screening rules}
\label{sect_related}

\subsection{Sequential strong rules} \label{subsect_ssr}

SSR \citep{tibshirani2012strong} is a heuristic screening rule for discarding features when solving the lasso over a grid of decreasing regularization parameter values $\lambda_1 > \lambda_2 > \ldots > \lambda_K$. Specifically, after solving for $\widehat{\bb}(\lambda_{k})$ at $\lambda_{k}$, SSR discards the $j$th feature from the optimization at $\lambda_{k+1}$ if
%\begin{align} \label{SSR}
%\left| \frac{1}{n} \x_j^T \left(\y - \X\widehat{\bb}(\lambda_{k}) \right) \right| < 2 \lambda_{k+1} - \lambda_{k}.
%\end{align}
\begin{align} \label{SSR}
\left| \x_j^T \r(\lambda_k) / n \right| < 2 \lambda_{k+1} - \lambda_{k},
\end{align}
where $\r(\lambda_k) = \y - \X\widehat{\bb}(\lambda_{k})$ is the residual vector at $\lambda_k$. 

To see the rationale of SSR, we start by noting that $\widehat{\bb}(\lambda)$ satisfies the following KKT conditions for the lasso problem (\ref{linear_lasso}):
\begin{align} \label{KKT_lasso}
\begin{cases}
\x_j^T \r(\lambda) / n = \lambda \text{sign}(\widehat{\beta}_j), &\text{ if } \widehat{\beta}_j \neq 0,\\
\left| \x_j^T \r(\lambda) / n \right| \leq \lambda, &\text{ if } \widehat{\beta}_j = 0.
\end{cases}
\end{align}
Let $c_j(\lambda) = \frac{1}{n}\x_j^T \r(\lambda_k)$. The key idea behind SSR is to assume $c_j(\lambda)$ is non-expansive in $\lambda$ (or the ``unit-slope'' bound):
\begin{align} \label{SSR_nonexpan}
\left| c_j(\lambda) - c_j(\tilde{\lambda}) \right| \leq |\lambda - \tilde{\lambda}|, \text{ for any } \lambda, \tilde{\lambda} \in (0, \lambda_{max}].
\end{align}
Now, given $\widehat{\bb}(\lambda_{k}), \lambda_{k}, \lambda_{k+1} \;(\lambda_{k} \geq \lambda_{k+1})$, if conditions (\ref{SSR}) and (\ref{SSR_nonexpan}) are satisfied, we have
\begin{align*}
\left| c_j(\lambda_{k+1}) \right| &\leq \left|c_j(\lambda_{k+1}) - c_j(\lambda_{k}) \right| + \left|c_j(\lambda_{k}) \right| \\
&<\lambda_{k} - \lambda_{k+1} + (2\lambda_{k+1} - \lambda_{k}) \\
& =\lambda_{k+1}, 
\end{align*}
and thus $\widehat{\beta}_j(\lambda_{k+1}) = 0$, implied by the KKT conditions (\ref{KKT_lasso}). 

SSR is simple and able to screen out a large amount of inactive features (i.e., those whose coefficients equal zero). However, since assumption (\ref{SSR_nonexpan}) may be violated, SSR requires checking KKT conditions (\ref{KKT_lasso}) for all $p$ coefficients after convergence has been reached at each value of $\lambda$ to ensure that the calculated $\widehat{\beta}(\lambda_{k+1})$ is a solution to the original optimization problem.  This process is time-consuming when $p$ is large, and even more so if any violations occur, as this involves re-solving the lasso problem with the erroneously discarded features now included.  Fortunately, empirical studies show that violations are rare, although certainly possible; see Section 3 of \citep{tibshirani2012strong} for a thorough analysis.

\subsection{Safe rules} \label{sect_safe}

As noted in the introduction, there are a number of safe rules in the literature; we focus primarily on EDPP rules, as they appear to be the most powerful safe rules developed thus far.  EDPP rules are constructed by projecting the scaled response vector onto a nonempty closed and convex polytope.
Here we derive simplified versions of the basic EDPP rule (BEDPP) and the sequential EDPP rule (SEDPP) under the standardization condition~(\ref{stand}). Compared to original rules, the simplified ones reveal a clearer picture of the computational complexity and reduce the computational burden somewhat. We refer readers to \cite{JMLR:v16:wang15a} for the original EDPP rules and additional technical details. 

The EDPP rules are based on the dual formulation of Problem~\eqref{linear_lasso}:
\begin{align}
\widehat{\bt}(\lambda) = & \argmax_{\bt \in \mathbb{R}^n} \frac{1}{2n} \| \y \|^2 - \frac{n\lambda^2}{2} \|\bt  - \frac{\y}{n\lambda} \|^2 \label{linear_dual} \\
\text{subject to } & |\x_j^T \bt| \leq 1, \quad \forall j=1, \cdots, p, \label{dual_constraints}
\end{align}
where $\widehat{\bt}(\lambda)$ is the dual optimal solution of Problem~\eqref{linear_lasso} under the constraints~\eqref{dual_constraints}. The dual and primal solutions are related via: 
\begin{align} \label{link}
  \widehat{\bt}(\lambda) = \frac{\y - \X \widehat{\bb}(\lambda)}{n\lambda}
%, \quad \x_j^T \widehat{\bt}(\lambda) = \begin{cases} \text{sign}(\widehat{\beta}_j) &\text{ if } \widehat{\beta}_j \neq 0,\\
%[-1, 1] &\text{ if } \widehat{\beta}_j = 0.
%\end{cases}
\end{align}

The original EDPP rules are developed by exploiting the geometric properties of the dual solutions. The simplified BEDPP and SEDPP rules are stated as the following theorems.

\begin{theorem}[BEDPP] \label{theorem_bedpp}
  For the lasso problem (\ref{linear_lasso}), let $\lambda_m := \lam_{\max} = \max_{j} |\x_j^T \y / n|$ and $\x_* = \argmax_{\x_j} |\x_j^T \y|$. For any $\lambda \in (0, \lambda_{m}]$, under condition (\ref{stand}) we have $\widehat{\beta}_j(\lambda)=0$ if
\begin{multline}
\label{BEDPP_rule}
  \left |(\lambda_m + \lambda) \x_j^T \y - (\lambda_m - \lambda)\text{sign}(\x_*^T \y) \lambda_m \x_j^T \x_* \right | <\\
  2n \lambda \lambda_m - (\lambda_m - \lambda) \sqrt{n \|\y\|^2 - n^2 \lambda_m^2}.
\end{multline}
\end{theorem}
%% \begin{proof}
%% See Appendix~\ref{proof_bedpp_lasso}.
%% \end{proof}

\begin{theorem}[SEDPP] \label{theorem_sedpp}
For the lasso problem (\ref{linear_lasso}), let $\lambda_{m}:= \lam_{\max} = \max_{j} | \x_j^T \y / n|$. Suppose we are given a sequence of $\lambda$ values $\lambda_m = \lambda_0 > \lambda_1 > \ldots > \lambda_K$. Then under condition (\ref{stand}): 
\begin{enumerate}
\item For any $0 < k < K$, we have $\widehat{\beta}_j(\lambda_{k+1})=0$ if  $\widehat{\bb}(\lambda_k)$ is known and the following holds:
\begin{multline}
\label{SEDPP_rule}
\left | \frac{\x_j^T \left(\y - \X \widehat{\bb}(\lambda_k) \right)}{\lambda_k} + \frac{c}{2} \left(\x_j^T \y - \frac{ a \x_j^T \X \widehat{\bb}(\lambda_k)}{ \| \X \widehat{\bb}(\lambda_k) \|^2 } \right) \right | <\\ n- \frac{c}{2}  \sqrt{ n \| \y \|^2 - \frac{n a^2}{\| \X \widehat{\bb}(\lambda_k) \|^2 }}
\end{multline}
where $c = \frac{\lambda_k - \lambda_{k+1}}{ \lambda_k \lambda_{k+1}}$ and $a = \y^T \X \widehat{\bb}(\lambda_k)$ are two scalars.
\item For k = 0, i.e., $\lambda_k = \lambda_m$, SEDPP rule reduces to BEDPP rule. That is, we have $\widehat{\beta}_j(\lambda_{k+1})=0$ if rule (\ref{BEDPP_rule}) holds, in which $(\lambda_m, \lambda)$ is replaced by $(\lambda_0, \lambda_{1})$. 
\end{enumerate}
\end{theorem}
%% \begin{proof}
%% See Appendix~\ref{proof_sedpp_lasso}.
%% \end{proof}

%\begin{remark}
%{\color{red}
%It's important to mention that, \cite{fercoq2015mind} argues SEDPP may not be safe as ``it requires the exact dual optimal solution of the previous Lasso problem, which is not available in practice''. 
%However, as Theorem 2.2 indicates, SEDPP actually doesn't necessarily need the exact dual optimal solution since the dual solution
%}
%\end{remark}

Compared to SEDPP, the BEDPP rule is non-sequential in that screening at $\lambda_{k+1}$ via BEDPP doesn't require the lasso solution at $\lambda_k$. As a result, BEDPP is much simpler to compute but less powerful in discarding inactive features, as shall seen in Section~\ref{subsect_complex}.

%SEDPP - though guaranteed safe - is not as powerful as SSR in discarding inactive features. Moreover, SEDPP is inherently complicated and much more time-consuming to execute. In this section, we provide some analysis on the performance of existing rules with respect to the rejection power and complexity.
%

An alternative safe rule, the Dome test, is similar to BEDPP in that it is non-sequential and requires only a small computational burden; due to space constraints, we omit the details of the Dome test from this paper and refer interested readers to~\cite{xiang2012fast} and \cite{Xiang2016lassoScreen}. A supplementary material containing the details of the simplified Dome test can be found on the GitHub page \footnote{\url{https://github.com/YaohuiZeng/HSSR_paper_supplementary/blob/master/HSSR_supplementary_for_Dome.pdf}}.

\section{Hybrid safe-strong rules} \label{sect_HSSR}

In this section, we define our newly proposed hybrid safe-strong rules (HSSR) and compare their computational complexity to the rules discussed in Section~\ref{sect_related}.  In addition, we present a re-designed pathwise coordinate descent algorithm that takes advantage of these rules to increase the efficiency of solving the lasso.

\subsection{Definition}

%{\color{red}The motivation of HSSR relies on three insights from previous Section: a) SSR spends most of the time ($O(np)$) on checking KKT conditions~(\ref{KKT_lasso}), after which only $O(p)$ operations is needed to evalute the rule itself~(\ref{SSR}); b) as shown in Figure~\ref{fig_reject}, at the early stage of the solution path a safe rule like BEDPP is powerful enough to \textit{safely} discard a considerable portion of features; and c) for a given feature $\x_j$, the safe rule needs only $O(1)$ calculations, whereas SSR requires $O(n)$ due to the KKT checking. Therefore, incorporating a simple yet safe rule into SSR is expected to save a large amount of computing time spent on unnecessary KKT checking at the expense of minimal computation on the safe rule itself. }
 
The motivation of HSSR is to remove a large amount of unnecessary post-convergence KKT checking, required by SSR, on features that could have been discarded by a safe screening rule. In principle, any safe rule can be combined with SSR, resulting in a family of rules which we call hybrid safe-strong rules and define as follows.

\begin{definition}
For solving the lasso problem~(\ref{linear_lasso}) over a sequence of $\lambda$ values $\lambda_1 > \lambda_2 > \ldots > \lambda_K$, suppose that there exists a safe rule and that $\widehat{\bb}(\lambda_k)$ is known. Let $\mathcal{S}_{k+1}$ denote the {\em safe set}, i.e., the set of features not discarded by the safe rule at $\lambda_{k+1}$. Then a corresponding hybrid safe-strong rule (HSSR) can be formulated by combining the safe rule with SSR. Specifically, HSSR discards the $j$th feature from the lasso optimization at $\lambda_{k+1}$ if
\begin{align} \label{hssr_rule}
j \in \mathcal{S}_{k+1}^c \cup \{j \in \mathcal{S}_{k+1}: |\x_j^T\r(\lambda_k)| / n| < 2 \lambda_{k+1} - \lambda_k \}, 
\end{align}
where $\r(\lambda_k) = \y - \X \widehat{\bb}(\lambda_k)$.
\end{definition}

HSSR builds upon SSR and thus enjoys all of its advantages: simple, sequential, and powerful to discard a large portion of features. As a drawback, it also requires post-convergence KKT checking. However, HSSR only needs to perform KKT checking over a subset of features since all features in the set $\mathcal{S}_{k+1}^c$ are discarded by the safe rule. Provided that the safe rule is simple to calculate, by which we mean that its time complexity is $O(np)$, HSSR will be more efficient computationally than SSR.
The amount of efficiency gained depends on the safe rule, with more powerful rules providing greater increases in speed.

In this paper, two instances of HSSR, namely SSR-BEDPP and SSR-Dome, are studied. These two rules respectively use BEDPP and the Dome test as the safe rule.
An essential property of HSSR is that for any problem with a unique global optimum and algorithm that converges to that solution, incorporating HSSR into the algorithm will yield the same solution, as stated in the following theorem.

%\begin{theorem} [Convergence] 
%Let $a(\cdot)$ be an iterative algorithm that solves the lasso problem~\eqref{linear_lasso} at a given $\lambda$. \textbf{Suppose the lasso problem~\eqref{linear_lasso} is strictly convex} such that the sequence of solutions given by $a(\cdot)$ converges to the unique global optimum, $\widehat{\bb}(\lambda)$. Then that algorithm along with HSSR screening \textbf{and post-KKT checking procedure over the safe set $\mathcal{S}(\lambda)$} converges to the same solution $\widehat{\bb}(\lambda)$. 
%\end{theorem}

\begin{theorem}[Convergence]
\label{theorem_convergence}
Suppose the lasso problem~\eqref{linear_lasso} at a given $\lambda$ is strictly convex such that the sequence of solutions produced by an iterative algorithm $a(\cdot)$ (such as coordinate descent) converges to the  unique global optimum, $\widehat{\bb}(\lambda)$. Then that algorithm with HSSR screening converges to the same solution $\widehat{\bb}(\lambda)$. 
\end{theorem}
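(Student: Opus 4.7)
The plan is to prove convergence by (i) verifying that the safe component of HSSR is exact (its discarded features really do have zero coefficients at the true optimum), and (ii) showing that the strong-rule component, although heuristic, is always corrected by the mandatory post-convergence KKT check, and that the resulting correction loop terminates in finitely many steps. Combined with the assumed strict convexity, this forces the algorithm's output to coincide with $\widehat{\bb}(\lambda)$.

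First I would make the working decomposition explicit. At stage $\lambda_{k+1}$, partition $\{1,\ldots,p\}$ into three disjoint sets using~\eqref{hssr_rule}:
\begin{align*}
\mathcal{S}_{k+1}^c \ , \quad \mathcal{T}_{k+1} := \{j \in \mathcal{S}_{k+1} : |\x_j^T \r(\lambda_k)|/n < 2\lambda_{k+1} - \lambda_k\} \ , \quad \mathcal{W}_{k+1} := \mathcal{S}_{k+1} \setminus \mathcal{T}_{k+1},
\end{align*}
where $\mathcal{W}_{k+1}$ is the working set on which the algorithm $a(\cdot)$ is actually run. Because the safe rule (BEDPP or Dome) is safe by construction, I would invoke Theorem~\ref{theorem_BEDPP} (or the analogous Dome result) to conclude $\widehat\beta_j(\lambda_{k+1}) = 0$ for every $j \in \mathcal{S}_{k+1}^c$; equivalently, the KKT condition $|\x_j^T\r(\lambda_{k+1})|/n \leq \lambda_{k+1}$ is automatic there and need never be revisited.

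Next I would describe the outer loop driven by KKT checking. Running $a(\cdot)$ on $\mathcal{W}_{k+1}$ with the remaining coefficients pinned at $0$ produces, by the stated convergence hypothesis, the unique minimizer of the restricted problem; denote its coefficient vector $\tbb$. This $\tbb$ satisfies the KKT conditions~\eqref{KKT_lasso} on $\mathcal{W}_{k+1}$ by optimality, and on $\mathcal{S}_{k+1}^c$ by safeness. The only possible violators are indices in $\mathcal{T}_{k+1}$ — the strong-rule discards. After convergence we therefore scan $\mathcal{T}_{k+1}$, move any $j$ with $|\x_j^T(\y-\X\tbb)|/n > \lambda_{k+1}$ into $\mathcal{W}_{k+1}$, and re-solve; repeat until no violations remain. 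Since $\mathcal{W}_{k+1}$ grows strictly monotonically, is contained in the finite set $\mathcal{S}_{k+1}$, and is bounded above in size by $|\mathcal{S}_{k+1}|$, this outer loop terminates in at most $|\mathcal{S}_{k+1}|$ rounds.

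Finally, I would close the argument: upon termination, the produced $\tbb$ satisfies all $p$ KKT conditions of~\eqref{linear_lasso} at $\lambda_{k+1}$ (on $\mathcal{W}_{k+1}$ by the algorithm's optimality, on $\mathcal{T}_{k+1}$ by the exit condition of the loop, on $\mathcal{S}_{k+1}^c$ by safeness). Strict convexity of the lasso objective at $\lambda_{k+1}$ makes the KKT conditions both necessary and sufficient for the unique global minimizer, so $\tbb = \widehat{\bb}(\lambda_{k+1})$. The main obstacle is not any single step but the bookkeeping: one must carefully argue that KKT-violating features are only possible in $\mathcal{T}_{k+1}$ (not in $\mathcal{S}_{k+1}^c$), which is precisely where the safeness of the safe rule is essential; without that guarantee, one could miss active features permanently and the argument collapses. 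The rest is a standard active-set / working-set termination argument.
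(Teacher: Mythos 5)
Your proposal follows essentially the same route as the paper's proof: the paper likewise splits the argument into (i) safeness implying the optimum decomposes as $\widehat{\bb}(\lambda) = (\zero, \widehat{\bb}_{\mathcal{S}}^T(\lambda))^T$ with $\widehat{\bb}_{\mathcal{S}}(\lambda)$ solving the reduced problem~\eqref{reduced_opt} on the safe set, and (ii) the SSR post-convergence KKT check certifying optimality on that reduced problem; your three-set partition and finite-termination argument for the correction loop simply make explicit what the paper dismisses as ``easy to verify,'' which is a genuine improvement in rigor. The one step you should reroute is the claim that the candidate $\tbb$ satisfies the KKT conditions on $\mathcal{S}_{k+1}^c$ ``by safeness.'' Safeness is a statement about the true optimum $\widehat{\bb}(\lambda_{k+1})$ (equivalently, about the true dual optimum lying in the region $\boldsymbol{\Theta}$), not about the residual $\y - \X\tbb$ at an arbitrary candidate, and indeed nothing prevents $\bigl|\x_j^T(\y-\X\tbb)\bigr|/n > \lambda_{k+1}$ for some $j \in \mathcal{S}_{k+1}^c$ at intermediate iterates — the algorithm never checks these coordinates. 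The clean closing, as in the paper, is: at termination $\tbb$ satisfies all KKT conditions of the \emph{reduced} problem on $\mathcal{S}_{k+1}$, hence minimizes it; by safeness the true optimum is supported on $\mathcal{S}_{k+1}$, so its restriction is feasible for the reduced problem and attains the same value, whence strict convexity forces $(\zero, \tbb_{\mathcal{S}}^T)^T = \widehat{\bb}(\lambda_{k+1})$; the KKT inequalities on $\mathcal{S}_{k+1}^c$ then hold a posteriori rather than serving as an input to the argument. With that two-line repair your proof is complete and, if anything, more careful than the paper's.
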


\begin{proof}
Let $\X_{\mathcal{S}}$ denote the submatrix of $\X$ consisting only of the features in $\mathcal{S}(\lambda)$. By the definition of a safe rule, the global optimum $\widehat{\bb}(\lambda)$ can be decomposed as $\widehat{\bb}(\lambda) = \left(\zero, \widehat{\bb}^T_{\mathcal{S}}(\lambda) \right)^T$, where $\widehat{\bb}_{\mathcal{S}}(\lambda)$ is the solution to the following optimization problem:
\begin{align} \label{reduced_opt}
\widehat{\bb}_{\mathcal{S}}(\lambda) = \argmin_{\bb_\mathcal{S} \in \mathbb{R}^{|\mathcal{S}(\lambda)|}} \frac{1}{2n} \|\y - \X_{\mathcal{S}} \bb_\mathcal{S}\|^2 + \lambda \|\bb_\mathcal{S}\|_1.
\end{align}

Furthermore, it's easy to verify that the algorithm $a(\cdot)$ with SSR screening for solving~\eqref{reduced_opt} converges to the global optimum $\widehat{\bb}_{\mathcal{S}}(\lambda)$. This is because the KKT checking procedure required by SSR guarantees the final solution satisfies the KKT optimality conditions and hence is the global optimum. Therefore, the algorithm with HSSR screening converges to $\widehat{\bb}(\lambda)$. 
\end{proof}

%HSSR depends on SSR and therefore requires post-convergence KKT checking. However, HSSR only needs to perform KKT checking over a subset of features in $\mathcal{S}_{k+1}$ since features in $\mathcal{S}_{k+1}^c$ are safely removed by the safe rule. Therefore, HSSR only costs $O(n\sum_{k=1}^K |\mathcal{S}_k|))$ operations to obtain the entire lasso path of $K$ values of $\lambda$. Table~\ref{tab_complexity} compares the time complexity of computing screening rules along the entire path. Intuitively, a more powerful safe rule would lead to a smaller safe set $\mathcal{S}_k $ and hence larger performance gain. In addition, a substantial amount of computation savings can be expected during the early stage of the lasso path. Therefore, the complexity of HSSR can be much smaller than that of SSR or SEDPP.

\subsection{Performance analysis} \label{subsect_complex}

Intuitively, the computational savings achieved by feature screening will be negated if the screening rule itself is too complicated to execute. Therefore, an efficient rule needs to balance the trade-off between its computational complexity and rejection power (i.e., how many features can be discarded). That is, an ideal screening rule should be powerful enough to discard a large portion of features and also relatively simple to compute.
To show the advantages of HSSR, we compare the aforementioned screening rules in terms of the rejection power and computational complexity of the rules themselves.

\subsubsection{Screening power}

Here we present an empirical comparison of different rules in terms of the power to discard features. Figure~\ref{fig_reject} depicts the results based on the GENE data (See details in Section~\ref{subsect_real_lasso}). First, it's important to note that HSSR, by construction, discards at least as many features as SSR does. Second, HSSR, SSR and SEDPP discard far more features than the non-sequential rules BEDPP and Dome.
%Note that SEDPP rejects fewer features than SSR when $\lambda$ is small.
In particular, the screening power of BEDPP and Dome decreases rapidly as $\lambda$ decreases. For example, BEDPP cannot discard any features when $\lambda / \lambda_{\max}$ is smaller than 0.45 in this case, whereas Dome is the least powerful and discards virtually no features when $\lambda / \lambda_{\max}$ is less than 0.6.

\begin{figure}
\centering
\includegraphics[width=0.65\linewidth]{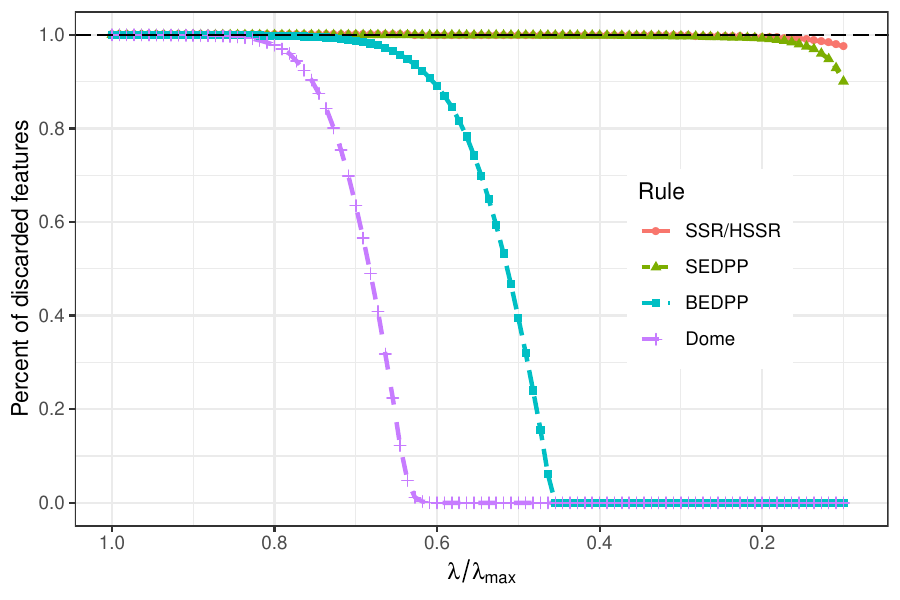}
%\caption{Comparison of the percentage of features discarded by different screening rules.}
\caption{Percent of features discarded.}
\label{fig_reject}
\end{figure}

\subsubsection{Computational complexity}

Table~\ref{tab_complexity} presents the complexity of computing these rules for the entire path of $K$ values of $\lambda$.

For SSR~(\ref{SSR}), it's important to observe that the quantities needed to check the KKT conditions~(\ref{KKT_lasso}), $\x_j^T \r(\lambda_k)$, can be re-used for executing SSR at $\lambda_{k+1}$ for that feature. Therefore, SSR requires $O(np)$ operations, as the dominant computation is calculating $\X^T\r(\lambda_k)$. However, since $\r(\lambda_k)$ changes as a function of $\lambda_k$, the total complexity of SSR is $O(npK)$ over the entire solution path. 

HSSR, on the other hand, only needs to perform KKT checking over the features not discarded by the safe screening step.  Thus, $\x_j^T \r(\lambda_{k-1})$ must be calculated only for features in the safe set $\mathcal{S}_{k}$, yielding $O(n\sum_{k=1}^K |\mathcal{S}_k|))$ operations. When the safe rule is effective (e.g. when $\lambda$ is relatively large, as shown in Figure~\ref{fig_reject}), HSSR would avoid a large amount of unnecessary KKT checking and hence be much more efficient than SSR.

The complexity of SEDPP~(\ref{SEDPP_rule}) is more involved. During coordinate descent, the residuals $\r(\lambda_k)$ are continually updated and stored.  Thus, $\X\widehat{\bb}(\lambda_{k})$ can be obtained at a cost of $O(n)$ operations since $\X\widehat{\bb}(\lambda_{k}) = \y - \r(\lambda_k)$. Furthermore, only $O(n)$ calculations are needed to update $\|\X\widehat{\bb}(\lambda_{k})\|$ and $a$, while quantities like $\x_j^T\y$ and $\| \y \|$ can be pre-computed to avoid duplicated calculations. The more demanding parts are on the left hand side of~(\ref{SEDPP_rule}), specifically, the two terms $\x_j^T\r(\lambda_k)$ and $\x_j^T\X\widehat{\bb}(\lambda_{k})$.  Since these must be calculated for all features, this essentially involves calculating $\X^T\r(\lambda_k)$ and $\X^T\X\widehat{\bb}(\lambda_{k})$, both of which require $O(np)$ calculations.  Thus, similar to that of SSR, the total complexity of SEDPP is $O(npK)$ for obtaining the entire solution path. %{\color{red}Note that, though, SSR may need extra computation budget if violations occur, whereas SEDPP needs some extra nonnegligible calculations which may add up to reduce its performance.[Q: re-wording?]}

Finally, the complexity of executing BEDPP~(\ref{BEDPP_rule}) over the solution path is only $O(np)$ as its dominant calculations are $\X^T\y$ and $\X^T\x_*$, which only need to be calculated once. After these initial calculations, only $O(p)$ operations are needed to compute the rule, resulting in a complexity of $O(pK)$ over the entire path. Hence the total complexity is $O(np)$ provided that $n$ is larger than $K$. The Dome test also has complexity of $O(np)$, and can be analyzed in the same fashion based on results in~\cite{xiang2012fast}.

\begin{table}[t]
\caption{Complexity of computing screening rules along the entire path of $K$ values of $\lambda$. $| \mathcal{S}_k |$ is the cardinality of safe set $\mathcal{S}_k$ obtained by the safe rule used in HSSR.}
\centering
\begin{tabular}{cccccc}
\toprule
Dome & BEDPP & SEDPP & SSR & HSSR \\
\midrule
$O(np)$ & $O(np)$ & $O(npK)$ & $O(npK)$ & $O(n\sum_{k=1}^K |\mathcal{S}_k|))$\\
\bottomrule
\end{tabular}
\label{tab_complexity}
\end{table}

\subsubsection{Advantages of HSSR}
The advantages of HSSR can be summarized as follows:
\begin{enumerate}
\item \textbf{Computational efficiency}: Solving the lasso with HSSR screening, as compared to other rules, involves the least computational burden. As we will see in Section~\ref{sect_exp}, the result is that HSSR is the fastest of the approaches considered here.
\item \textbf{Memory efficiency}: Both SSR and SEDPP have to fully scan the feature matrix $K$ times, while HSSR only needs to do so for the portion of the lasso path where the safe rule is not able to discard any features. HSSR is therefore more memory-efficient, a particularly appealing advantage in out-of-core computing, where fully scanning the feature matrix requires disk access and therefore becomes the computational bottleneck.
\item \textbf{Generalizability}: HSSR is a rather general lasso screening framework, and can be easily extended to other lasso-type problems such as the elastic net and the group lasso.
\end{enumerate}

\subsection{Pathwise coordinate descent with HSSR}

%%-----------------------------------------------------------------
%% YZ: dicided to take out active-set cycling (AC) from this paper 
%%     to make the comparison solely based on the rules themselves.
%%     Backup section has the old version of algorithm including AC.
%%-----------------------------------------------------------------

The pathwise coordinate descent (PCD) algorithm~\citep{friedman2007pathwise} solves the lasso solution path along a grid of decreasing parameter values $\lambda_1 > \lambda_2 > \ldots > \lambda_K$. When solving for $\widehat{\bb}(\lambda_k)$, PCD utilizes previous solution $\widehat{\bb}(\lambda_{k-1})$ as warm starts. This ``warm start'' strategy makes the algorithm very efficient. 

In this section, we re-design the PCD algorithm by incorporating HSSR, as described in Algorithm~\ref{algo_PCD_HSSR}. The algorithm starts by initializing the safe sets $\mathcal{S}$ and $\mathcal{S}_{prev}$, which saves the safe set at previous iteration. Another set $\mathcal{H}$, called the strong set, is also initialized to store the features in the safe set not discarded by SSR screening. The \texttt{Flag} variable indicates whether the safe rule screening should be turned off or not. The rationale of this design is to stop using the safe rule once it is no longer capable of discarding any features (See Figure~\ref{fig_reject}). Note also that the algorithm only needs to update $z_j$ for those ``newly-entered'' features in the safe set (line 4) before conducting SSR screening. This is because all $z_j$'s associated with features in $\mathcal{S}$ must have already been computed during post-convergence KKT checking at the previous $\lambda$ (line 15).

\begin{algorithm}[ht]
%    \small
%    \SetAlgoNoLine
%    \LinesNumbered
    \SetKwInOut{Input}{Input}
    \SetKwInOut{Output}{Output}
    \SetKwInOut{Return}{Return}
	\SetKwInOut{Initialize}{Initialize}
	\SetKw{KwGoTo}{go to}
	
    \Input{$\{\x_j\}_{j=1}^p$, $\y$, $\lambda_{max} = \lambda_0 > \lambda_1 > \ldots > \lambda_K$}
    \Initialize{$\mathcal{S} = \mathcal{S}_{prev} = \emptyset, ~ \mathcal{H} = \emptyset, ~ \r = \y, ~ \{z_j = 0: j = 1, 2, \ldots, p\}$, Flag = FALSE} 
	\For{$k \leftarrow 1$ \KwTo $K$} {

		\If{Flag = FALSE} {
        		Safe Screening: $\mathcal{S} \leftarrow \{j: \x_j \text{ survives safe screening}\}$
        		
        		Update $z_j = \x_j^T \r/n$ over set $\{j: j \in \mathcal{S} ~ \backslash ~ \mathcal{S}_{prev}\}$
        		
%        		$\mathcal{Z} \leftarrow \mathcal{Z} \cup \{z_j = \x_j^T \r/n: j \in \mathcal{S}_{new} \cap \mathcal{S}_{old}^c \}$
        		
        		$\mathcal{S}_{prev} \leftarrow \mathcal{S}$
        		
    			\If{$|\mathcal{S}| = p$} {
        			Flag $\leftarrow$ TRUE
        			
        			%$\mathcal{S} \leftarrow \{1, 2, \ldots, p\}$
        		} 
      	}

		SSR screening: $\mathcal{H} \leftarrow \{j \in \mathcal{S}: |z_j | \geq 2 \lambda_{k} - \lambda_{k-1} \}$
		
		\While{not converged}{\label{solve_lasso}
		
			Solve~(\ref{linear_lasso}) for $\widehat{\bb}(\lambda_k)$ via coordinate descent iteration over features only in $\mathcal{H}$ and keep updating $\r$ 
			
		}

		Update $z_j = \x_j^T \r/n$ over set $\{j: j \in \mathcal{S} ~ \backslash ~ \mathcal{H}\}$, and check KKT violations: $\mathcal{V} \leftarrow \{j \in \mathcal{S} ~ \backslash ~ \mathcal{H}: | z_j | \geq \lambda_k \}$
			
		\If{$\mathcal{V} \neq \emptyset$}{
		
			$\mathcal{H} \leftarrow \mathcal{H} \cup \mathcal{V}$
				
			\KwGoTo \ref{solve_lasso} with current solution as a warm start
		}
				
		\textbf{save} $\widehat{\bb}_k$	

	}
	
    \Output{$\{\widehat{\bb}\}_{k=1}^K$}
    \caption{PCD algorithm with HSSR screening}
    \label{algo_PCD_HSSR}
\end{algorithm}

After SSR screening, the algorithm then solves the lasso problem for $\widehat{\bb}(\lambda_k)$ via coordinate descent iterations using features only in the strong set $\mathcal{H}$, as described by the \texttt{while} loop, until a predefined convergence criterion is met.
%The current residual vector $\r$ keeps updated simultaneously. 

The post-convergence KKT checking takes place in line 15 after a solution is obtained: KKT checking is applied to features that are outside of the strong set $\mathcal{H}$ but in the safe set $\mathcal{S}$. If any violations are detected, the strong set is updated by adding in the features which violate the KKT conditions, and the lasso then needs to be re-solved (line 18) with the updated strong set.

\section{Extensions to other lasso-type problems} \label{sect_ext}

%In this section, we demonstrate the generalizability of HSSR by extending SSR-BEDPP to two other popular lasso-type problems, the elastic net~\cite{zou2005regularization} and the group lasso~\cite{Yuan2006}.

\subsection{SSR-BEDPP for the elastic net}

The elastic net estimator $\widehat{\bb}(\lambda, \alpha)$ is defined~\citep{zou2005regularization} as the argument minimizing
\begin{align} \label{linear_enet}
\frac{1}{2n} \|\y - \X \bb\|^2 + \alpha \lambda \|\bb\|_1 + \frac{(1 - \alpha) \lambda}{2} \| \bb \|^2.
\end{align}

SSR can be applied to the elastic net with minimal changes, as shown in \cite{tibshirani2012strong}. Specifically, SSR discards the $j$th feature from the elastic net optimization at $\lambda_{k+1}$ if
\begin{align} \label{SSR_enet}
\left| \x_j^T \r(\lambda_k) / n \right| < \alpha (2 \lambda_{k+1} - \lambda_{k}).
\end{align}

Moreover, it can be shown that the KKT conditions for~(\ref{SSR_enet}) are
\begin{alignat}{2} \label{KKT_enet}
\x_j^T \r(\lambda) / n - (1-\alpha) \lambda \widehat{\beta}_j &= \lambda \text{sign}(\widehat{\beta}_j) & &\quad\text{ if } \widehat{\beta}_j \neq 0,\\
\left| \x_j^T \r(\lambda) / n - (1-\alpha) \lambda \widehat{\beta}_j \right| &\leq \lambda & &\quad\text{ if } \widehat{\beta}_j = 0.
\end{alignat}

The BEDPP rule in~\cite{{JMLR:v16:wang15a}} is not directly applicable to the elastic net problem. Here we extend BEDPP to the elastic net as the following theorem. 
%Q: remove derivation of SEDPP for elastic net? 

\begin{theorem} [BEDPP for elastic net]
  \label{theorem_bedpp_elastic_net}
  For the elastic net problem (\ref{linear_enet}), let $\lambda_m := \lambda_{max} = \max_{j} | \frac{\x_j^T \y}{\alpha n} |$ and $\x_* = \argmax_{\x_j} |\x_j^T \y|$. Under condition (\ref{stand}), for any $\lambda \in (0, \lambda_{m}]$ and $\x_j \neq \x_*$, we have $\widehat{\beta}_j(\lambda)=0$  if
\begin{multline}
\label{BEDPP_enet}
%\begin{split}
\left|(\lambda_m + \lambda) \x_j^T \y - (\lambda_m - \lambda) \frac{\text{sign}(\x_*^T \y) \alpha \lambda_m}{1 + \lambda (1 - \alpha)} \x_j^T \x_* \right| <\\ 2 n \alpha \lambda \lambda_m - (\lambda_m - \lambda) \sqrt{ n\| \y \|^2 (1 + \lambda (1 - \alpha)) - n^2 \alpha^2 \lambda_m^2}
%\end{split}
\end{multline}
\end{theorem}

%% \begin{proof}
%% See Appendix~\ref{proof_bedpp_elnet}.
%% \end{proof}

Analogous to \eqref{BEDPP_rule}, the complexity of \eqref{BEDPP_enet} for solving the elastic net over an entire solution path is $O(np)$ since, again, $O(np)$ calculations are needed to pre-compute quantities $\X^T\y$, $\X^T \x_*$, and $\| \y \|$. After that, only $O(p)$ operations are required to execute the rule. Moreover, given \eqref{SSR_enet}, \eqref{KKT_enet}, and \eqref{BEDPP_enet}, Algorithm~\ref{algo_PCD_HSSR} may be used for the elastic net, with appropriate modifications to the screening rules, KKT checking, and coordinate descent update.

\subsection{SSR-BEDPP for the group lasso}

As another example, we extend SSR-BEDPP to the group lasso problem. Suppose we have $p$ features assigned into $G$ non-overlapping groups. Let $W_g$ denote the number of features in the $g$th group. The group lasso problem \citep{Yuan2006} is defined as
\begin{align} \label{linear_glasso}
\widehat{\bb}(\lambda) = \argmin_{\bb \in \mathbb{R}^p} \frac{1}{2n} \left \| \y - \sum_{g=1}^G \X_g \bb_g \right \|^2+ \lambda \sum_g \sqrt{W_g} \|\bb_g\|,
\end{align}
where $\bb = (\bb_1^T, \ldots, \bb_G^T)^T$, $\X_g$ is the $n \times W_g$ sub-matrix whose columns correspond to features in group $g$, and $\bb_g = (\beta_{g,1}, \ldots, \beta_{g,W_g})^T$ is the associated coefficient vector. Here, in addition to the standardization described in Section~\ref{sect_intro}, we apply an additional level of standardization at the group level~\citep{Breheny2015}:
\begin{align} \label{stand_glasso}
\frac{1}{n} \X_g^T \X_g= \I, \quad g = 1, \ldots, G.
\end{align}

Given $\widehat{\bb}(\lambda_{k})$, it can be shown \citep{tibshirani2012strong} that SSR discards the $g$th group of coefficient vector $\widehat{\bb}_g(\lambda_{k+1})$ from the group lasso optimization at $\lambda_{k+1}$ if 
\begin{align} \label{SSR_glasso}
\left \| \frac{1}{n} \X^T_g \r(\lambda_k) \right\| < \sqrt{W_g} (2 \lambda_{k+1} - \lambda_k),
\end{align}
where $\r(\lambda_k) = \y - \sum_{\ell=1}^G \X_{\ell} \widehat{\bb}_{\ell} (\lambda_k)$. Moreover, the KKT conditions for~(\ref{linear_glasso}) are,
\begin{align} \label{KKT_glasso}
 \X_g^T \r(\lambda) / n = \lambda \sqrt{W_g} \bt_g, \quad g = 1, \ldots, G,
\end{align}
where $\bt_g$ is a subgradient of $\| \widehat{\bb}_g \|$.

EDPP rules have also been derived for the group lasso \citep{JMLR:v16:wang15a}. With some algebra, we present a simplified BEDPP under condition~(\ref{stand_glasso}) for the group lasso as the following theorem.
\begin{theorem} [BEDPP for group lasso]
\label{theorem_BEDPP_glasso}
For the group lasso problem (\ref{linear_glasso}), let $\lambda_m := \lambda_{max} = \max_g \frac{\| \X_g^T \y \|}{ n \sqrt{W_g}}$, $g_* = \argmax_{g} \frac{\| \X_g^T \y \|}{ n \sqrt{W_g}}$, $\X_*$ and $W_*$ the data matrix and size of the group associated with $g_*$, and $\bar{\v} = \X_* \X_*^T \y$. For any $\lambda \in (0, \lambda_m]$ and $g = 1, 2, \ldots, G$, under condition (\ref{stand_glasso}) we have $\widehat{\bb}_g(\lambda) = \zero$ if
\begin{multline}
\label{BEDPP_glasso}
%\sqrt{(\lambda + \lambda_m)^2 \| \X_g^T \y\|^2 - \frac{ 2 (\lambda_m^2 - \lambda^2) \y^T \X_g \X_g^T \bar{\v} }{n}  + \frac{(\lambda_m - \lambda)^2 \| \X_g^T \bar{\v} \|^2 }{n^2} } < \\
%\quad \quad 2 n \lambda \lambda_m \sqrt{W_g} - (\lambda_m -\lambda) \sqrt{n \| \y \|^2 - n^2 \lambda_m^2 W_* }
\sqrt{A-B+C} <\\ 2 n \lambda \lambda_m \sqrt{W_g} - (\lambda_m -\lambda) \sqrt{n \| \y \|^2 - n^2 \lambda_m^2 W_*}
\end{multline}
where
\begin{align*}
  A &= (\lambda + \lambda_m)^2 \| \X_g^T \y\|^2\\
  B &= 2 (\lambda_m^2 - \lambda^2) \y^T \X_g \X_g^T \bar{\v}/n\\
  C &= (\lambda_m - \lambda)^2 \| \X_g^T \bar{\v} \|^2/n^2.
\end{align*}
\end{theorem}
%% \begin{proof}
%% See Appendix~\ref{proof_bedpp_glasso}.
%% \end{proof}

Analogous to the lasso and elastic net, the complexity of executing~(\ref{BEDPP_glasso}) for an entire solution path costs $O(np)$. To see this, note that $\bar{\v}$ only needs to be calculated once and requires $O(nW_*)$ operations.  Thus, the most computationally intensive step of \eqref{BEDPP_glasso} is calculating $\X_g^T\bar{\v}$ and $\y^T\X_g$, each of which require $O(nW_g)$ operations, or $O(np)$ operations to calculate these intermediate quantities for all $G$ groups.  Once this is done, executing BEDPP rule for group lasso costs only $O(p)$ at each $\lambda_k$.

Given SSR and BEDPP rules for the group lasso, we can formulate the SSR-BEDPP rule and solve the group lasso based on Algorithm~\ref{algo_PCD_HSSR} with appropriate modifications to the screening rules and KKT checking given by \eqref{SSR_glasso}, \eqref{BEDPP_glasso}, and \eqref{KKT_glasso}. The coordinate descent update must also be modified to a group descent update (also known as a blockwise coordinate descent update) as described in \cite{qin2013efficient, Breheny2015, meier2008group}.

\section{Experiments} \label{sect_exp}

In this section, we conduct experiments to show that our proposed hybrid safe-strong rules significantly outperform the existing SSR and SEDPP rules. We also take into comparison the ``Active-set Cycling'' (AC) strategy~\citep{lee2007efficient}. AC is somewhat similar to SSR in that they both begin by solving the lasso over a subset of features and then check KKT conditions to verify the solution. AC, however, merely cycles over the nonzero coefficients. The idea of AC is simple and effective, and has been commonly applied to large-scale sparse learning problems with considerable speedup observed \citep{garrigues2009homotopy, tibshirani2012strong, lee2015strong, meier2008group}.

In all numerical experiments, we focus on solving the lasso or the group lasso problems over the entire path of 100 values of $\lambda$  which are equally spaced on the scale of $\lambda / \lambda_{max}$ from 0.1 to 1. All experiments in this section are conducted with 20 replications, and the average computing times (in seconds) are reported. The benchmarking platform is a MacBook Pro with Intel Core i7 @ 2.3 GHz and 16 GB RAM.

In every experiment, all algorithms converged to the same solutions at all values of $\lam$, to within numerical tolerance.  This was measured by the relative difference $RD(\lambda) = \{Q_\lam(\hat{\bb}_B) - Q_\lam(\hat{\bb}_G)\} / Q_\lam(\hat{\bb}_G)$, where $Q(\bb)$ denotes the objective (loss + penalty), $\hat{\bb}_B$ denotes the solution using the algorithm and implementation described here (using the \texttt{biglasso} package) and $\hat{\bb}_G$ is the solution based on a reference implementation (the \texttt{glmnet} package); in all situations, $\abs{RD(\lam)} < 2 \times 10^{-5}$.

\subsection{Results for the lasso}

In this section, we compare SSR-BEDPP and SSR-Dome with existing methods AC, SSR, and SEDPP in solving the standard lasso problem.  Basic pathwise coordinate descent (``Basic PCD'') with no screening or active cycling is used as baseline for the comparison. Our R package \texttt{biglasso}
%\footnote{\url{https://CRAN.R-project.org/package=biglasso}}
(Version 1.3-2) implements all these methods and is used for all the numerical studies.
%It should be noted that the convergence criterion for coordinate descent iteration in the package is that the inner coordinate-descent loop continues until the maximum change in the loss function after any coefficient update is less than \texttt{eps} times the null deviance, where \texttt{eps} is set to be $1e-8$. This stopping criterion is also used in R package \texttt{glmnet}\footnote{\url{https://CRAN.R-project.org/package=glmnet}}.

\subsubsection{Synthetic data} \label{subsect_sim_lasso}

We first demonstrate with synthetic data that SSR-BEDPP is more scalable in both $n$ and $p$ (i.e., number of observations and features). We adopt the same model in \cite{JMLR:v16:wang15a} to simulate data: $\y = \X \bb + 0.1 \bep$, where $\X$ and $\bep$ are i.i.d. sampled from $N(0, 1)$. Here we consider two different cases: 
\begin{enumerate}[(a)]
\item \textbf{Case 1: varying $p$}. We set $n=1,000$ and vary $p$ from 1,000 to 100,000. We randomly select 20 true features, and sample their coefficients from Unif[-1, 1]. After simulating $\X$ and $\bb$, we then generate $\y$ according to the true model;
\item \textbf{Case 2: varying $n$}. We set $p=10,000$ and vary $n$ from 200 to 10,000. $\bb$ and $\y$ are generated in the same way as in Case 1.
\end{enumerate}

Figure~\ref{fig_sim_lasso} compares the average computing time of solving the lasso over a sequence of 100 values of $\lambda$ by the different methods. In all settings, our rule SSR-BEDPP is uniformly 5x faster than Basic PCD. More importantly, SSR-BEDPP is around 2x faster than state-of-the-art rules SSR and SEDPP. Note that the computing times of SSR and SEDPP are almost the same so the lines of these two cannot be distinguished in the plots. Perhaps most surprisingly, SSR and SEDPP provide only a small advantage over AC, while SSR-BEDPP achieves more than a 2x additional speedup compared to AC, suggesting that HSSR does not merely add together the efficiency gains of SSR and BEDPP, but accomplishes something novel by hybridizing them together.

It's worth mentioning that the new rule SSR-Dome can also provide substantial speedup - 1.6x faster than AC and 1.4x faster than SSR or SEDPP, demonstrating the effectiveness of hybrid screening as a general optimization strategy. Since the Dome test itself is less powerful than the BEDPP rule \citep{JMLR:v16:wang15a} and takes equally long to compute, it is not surprising that SSR-BEDPP is the faster of the two approaches.

The simulations in this section involve independent features; the effect of correlation on the proposed algorithm is explored in the next section as well as in Appendix~\ref{App:Cor}.

\begin{figure}[t]
\centering
\includegraphics[width=\linewidth]{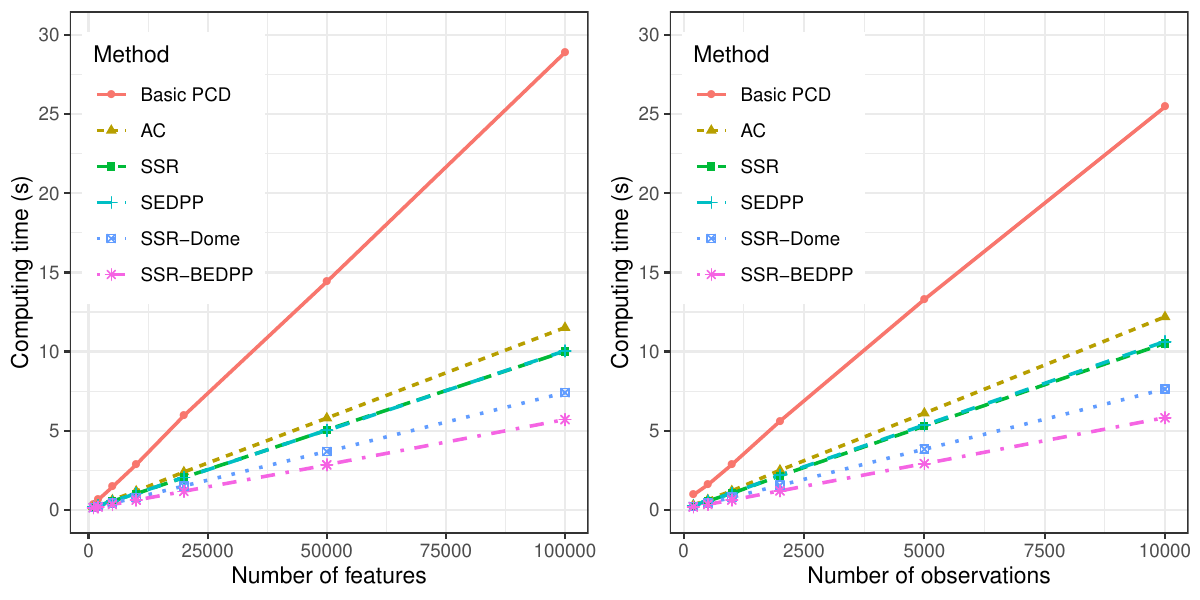}
%\caption{Average computing time (in seconds) as a function of the number of features $p$ (left) and the number of observations $n$ (right) for solving the lasso along a sequence of 100 values of $\lambda$ equally spaced on the scale of $\lambda / \lambda_{max}$ from 0.1 to 1. Note: (i) NS denotes the PCD algorithm alone, and AC is the active set cycling strategy; (ii) the lines for SSR and SEDPP overlay.}
\caption{Average computing time as a function of $p$ (left) and $n$ (right) for solving the lasso along a sequence of 100 values of $\lambda$. Note that the lines for SSR and SEDPP overlap and cannot be distinguished.}
\label{fig_sim_lasso}
\end{figure}

\subsubsection{Real data} \label{subsect_real_lasso}

Real-world data sets often have complicated signals and correlation structures which affect the performance of the screening rules. In this section, we compare the aforementioned methods using diverse real data sets:
\begin{enumerate}[(a)]
\item \textbf{Breast cancer gene expression data\footnote{\url{http://myweb.uiowa.edu/pbreheny/data/bcTCGA.html}} (GENE)}: this data set contains gene expression measurements of 17,322 genes of 536 breast cancer patients from The Cancer Genome Atlas project. The goal is to identify genes with expression levels related to that of the tumor suppressor gene BRCA1.
\item \textbf{MNIST handwritten image data\footnote{\url{http://yann.lecun.com/exdb/mnist/}} (MNIST)}: this data set contains grayscale images of handwritten digits with 60,000 images for training and 10,000 for testing. Each image is of $28 \times 28$ dimension. Following~\cite{JMLR:v16:wang15a}, we first use the training set to construct a feature matrix $\X \in \mathbb{R}^{784 \times 60000}$. We then randomly choose an image in the test set as the response vector $\y \in \mathbb{R}^{784}$ for each of the 20 replications.
\item \textbf{Cardiac fibrosis genome-wide association data\footnote{\url{https://arxiv.org/abs/1607.05636}} (GWAS)}: this data set contains single nucleotide polymorphism (SNP) data collected on 313 human hearts. The goal of the study is to discover SNPs that are associated with increased fibrosis. 
The response vector $\y \in \mathbb{R}^{313}$ is the log of the cardiomyoctye:fibroblast ratio, and the feature matrix $\X \in \mathbb{R}^{313 \times 660,496}$ records the data for the 660,496 SNPs.
\item \textbf{Subset of New York Times bag-of-words data\footnote{\url{https://archive.ics.uci.edu/ml/datasets/Bag+of+Words}} (NYT)}: this data set is from the UCI Machine Learning Repository \citep{Lichman:2013}. The raw data matrix contains 300,000 documents represented as rows of 102,660 words, where the cell $(i, j)$ records the number of occurrences of word $j$ in article $i$. Following~\citet{Xiang2016lassoScreen}, we preprocess the raw data by first removing documents with low word counts and then randomly selecting a subset of 5,000 documents and 55,000 words to form the feature matrix $\X \in \mathbb{R}^{5000 \times 55000}$. At each replication, a word column is randomly chosen from the rest set to be the  response $\y \in \mathbb{R}^{5000}$.
\end{enumerate}

Table \ref{tab_real_res_lasso} summarizes the dimensions of the  real data sets and the average computing times. The speedup of different methods relative to Basic PCD is depicted in Figure~\ref{fig_real_res_lasso}. Again, SSR-BEDPP outperforms all other methods with the most speedup on all data sets, ranging from 13.8x (NYT) to 52.7x (MNIST) faster than Basic PCD. 

In comparison to AC, SSR-BEDPP results in additional speedup ranging from 2.2x on GENE data to 3.7x on MNIST data. SSR and SEDPP, however, provide a meaningful improvement over AC only for the GWAS data; for the other three data sets, the speedup is quite small.  Overall, SSR-BEDPP is 1.3x to 3.2x faster than SSR and SEDPP based on the four real data sets.

\begin{table}[t]
\caption{Average computing time (standard error) for solving the lasso along a sequence of 100 values of $\lambda$ on real data sets.}
\centering
\begin{adjustbox}{max width=\linewidth, keepaspectratio, center}
\begin{tabular}{rcccc}
\toprule
\multirow{3}{*}{Method} & 
	\multicolumn{1}{c}{GENE} & \multicolumn{1}{c}{MNIST} &
	\multicolumn{1}{c}{GWAS} & \multicolumn{1}{c}{NYT} \\
	& \multicolumn{1}{c}{$n=536$} & \multicolumn{1}{c}{$n=784$} &
	\multicolumn{1}{c}{$n=313$} & \multicolumn{1}{c}{$n=5,000$} \\
	& \multicolumn{1}{c}{$p=17,322$} & \multicolumn{1}{c}{$p=60,000$} &
	\multicolumn{1}{c}{$p=660,495$} & \multicolumn{1}{c}{$p=55,000$} \\
\midrule
Basic PCD & 12.84 (0.06) & 91.73 (6.32) & 266.22 (1.14) & 246.87 (24.12) \\
AC & 1.54 (0.01) & 6.48 (0.11) & 43.59 (0.19) & 44.57 (1.96) \\
SSR & 1.13 (0.01) & 5.58 (0.04) & 21.89 (0.10) & 33.64 (0.64) \\
SEDPP & 1.26 (0.02) & 5.57 (0.04) & 21.47 (0.07) & 35.26 (1.21) \\
SSR-Dome & 0.86 (0.01) & 2.92 (0.07)& 18.87 (0.10) & 23.01 (1.59) \\
SSR-BEDPP & \textbf{0.69 (0.01)} & \textbf{1.74 (0.09)} & \textbf{16.27 (0.08)} & \textbf{17.88 (1.75)} \\
%Basic PCD & 12.84 & 91.73 & 266.22 & 246.87\\
%AC & 1.54 & 6.48 & 43.59 & 44.57 \\
%SSR & 1.13 & 5.58 & 21.89 & 33.64 \\
%SEDPP  & 1.26 & 5.57 & 21.47 & 35.26 \\
%SSR-Dome & 0.86 & 2.92 & 18.87 & 23.01 \\
%SSR-BEDPP & \textbf{0.69} & \textbf{1.74} & \textbf{16.27} & \textbf{17.88} \\
\bottomrule
\end{tabular}
\end{adjustbox}
\label{tab_real_res_lasso}
\end{table}

\begin{figure}[t]
\centering
\includegraphics[width=0.7\linewidth]{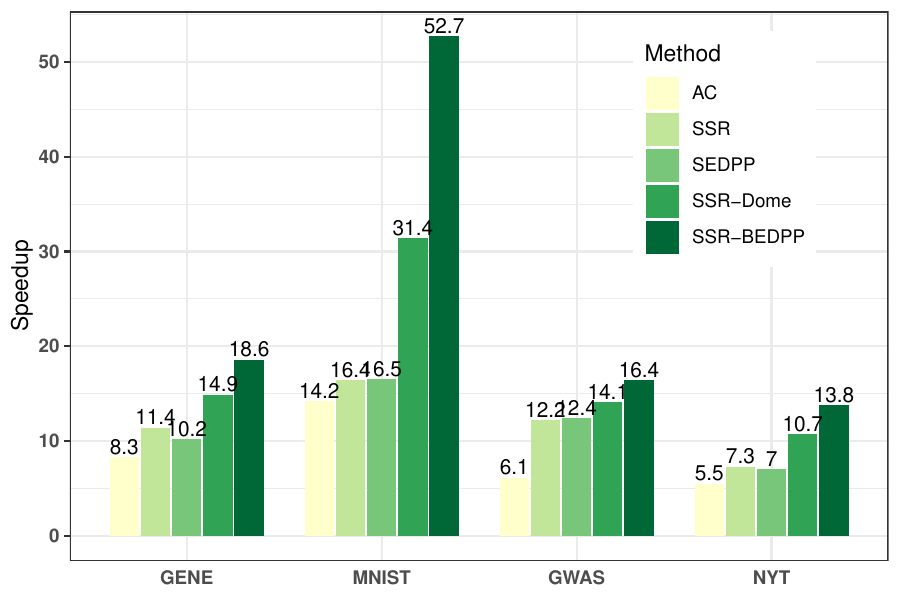}
%\caption{The speedup relative to PCD algorithm alone for solving the lasso along a sequence of 100 values of $\lambda$ on real data sets.}
\caption{The speedup relative to Basic PCD for solving the lasso along a sequence of 100 values of $\lambda$ on real data sets.}
\label{fig_real_res_lasso}
\end{figure}

%--------------------------------------------------------------
%% BACKUP: table with SE and speedup

%\begin{table*}
%\caption{Mean (SE) computing time (in seconds) and the speedup relative to solver alone for solving the lasso along a sequence of 100 $\lambda$ values on real data sets.}
%\centering
%%\begin{adjustbox}{max width=\linewidth, keepaspectratio, center}
%\begin{tabular}{r|cc|cc|cc|cc}
%\toprule
%\multirow{4}{*}{Method} & 
%	\multicolumn{2}{c|}{GENE} & \multicolumn{2}{c|}{MNIST} &
%	\multicolumn{2}{c|}{GWAS} & \multicolumn{2}{c}{NYT} \\
%	& \multicolumn{2}{c|}{$n=536$, $p=17,322$} & \multicolumn{2}{c|}{$n=784$, $p=60,000$} &
%	\multicolumn{2}{c|}{$n=313$, $p=660,495$} & \multicolumn{2}{c}{$n=5,000$, $p=55,000$} \\
%	& Time & Speedup & Time & Speedup & Time & Speedup & Time & Speedup \\
%\midrule
%NS & 12.84 (0.06) & 1.0 & 91.73 (6.32) & 1.0 & 266.22 (1.14) & 1.0 & 246.87 (24.12) & 1.0 \\
%AC & 1.54 (0.01) & 8.3 & 6.48 (0.11) & 14.2 & 43.59 (0.19) & 6.1 & 44.57 (1.96) & 5.5 \\
%SSR & 1.13 (0.01) &  11.4 & 5.58 (0.04) & 16.4 & 21.89 (0.10) & 12.2 & 33.64 (0.64) & 7.3\\
%SEDPP  & 1.26 (0.02) & 10.2 & 5.57 (0.04) & 16.5 & 21.47 (0.07) & 12.4 & 35.26 (1.21) & 7.0 \\
%SSR-Dome & 0.86 (0.01) & 14.9 & 2.92 (0.07) & 31.4 & 18.87 (0.10) & 14.1 & 23.01 (1.59) & 10.7 \\
%SSR-BEDPP & \textbf{0.69 (0.01)} & \textbf{18.6} & \textbf{1.74 (0.09)} & \textbf{52.7} & \textbf{16.27 (0.08)} & \textbf{16.4} & \textbf{17.88 (1.75)} & \textbf{13.8} \\
%\bottomrule
%\end{tabular}
%%\end{adjustbox}
%\label{tab_real_res_lasso2}
%\end{table*}
%--------------------------------------------------------------

\subsection{Results for the group lasso}

In this section, we conduct experiments via our R package \texttt{grpreg}\footnote{See Version 3.1-1 at: \url{https://github.com/YaohuiZeng/grpreg}} (Version 3.1-1) to compare SSR-BEDPP with existing methods AC, SSR, and SEDPP in solving the group lasso problem. Note again that basic group descent algorithm (``Basic GD'') with no screening or active cycling is used as baseline.
%The algorithm implemented in \texttt{grpreg} iterates until the change (on the standardized scale) in any coefficient is less than a predefined threshold \texttt{eps}, which is set to be $1e-5$ for experiments in this section.

\subsubsection{Synthetic data}

To generate the synthetic data, we again use the model: $\y = \X \bb + 0.1 \bep$, where $\X$ and $\bep$ are i.i.d. sampled from $N(0, 1)$. Here we fix the number of observations $n$ to be 1,000, and the number of features in all groups to be 10. We vary the number of total groups from 100 to 10,000. In all settings, we randomly select 10 nonzero groups (i.e., groups of features that having nonzero coefficients), and sample the 100 coefficients in these groups from Unif[-1, 1]. After simulating $\X$ and $\bb$, we then obtain $\y$ according to the true model.

Figure~\ref{fig_vary_ngrp} depicts the average computing time of solving the group lasso over a sequence of 100 $\lambda$ values. Again, the computing times by SSR and SEDPP are so close that the corresponding two lines cannot be distinguished. Similar conclusions as for lasso case can be drawn here: (1) our new rule SSR-BEDPP provides remarkable reduction of computing time uniformly across all settings by more than 7x speedup compared to Basic GD, and by around 2x speedup compared to SSR and SEDPP; (2) SSR and SEDPP performs almost identically, and offer only a small advantage over AC.

\begin{figure}[ht]
\centering
\includegraphics[width=0.7\linewidth]{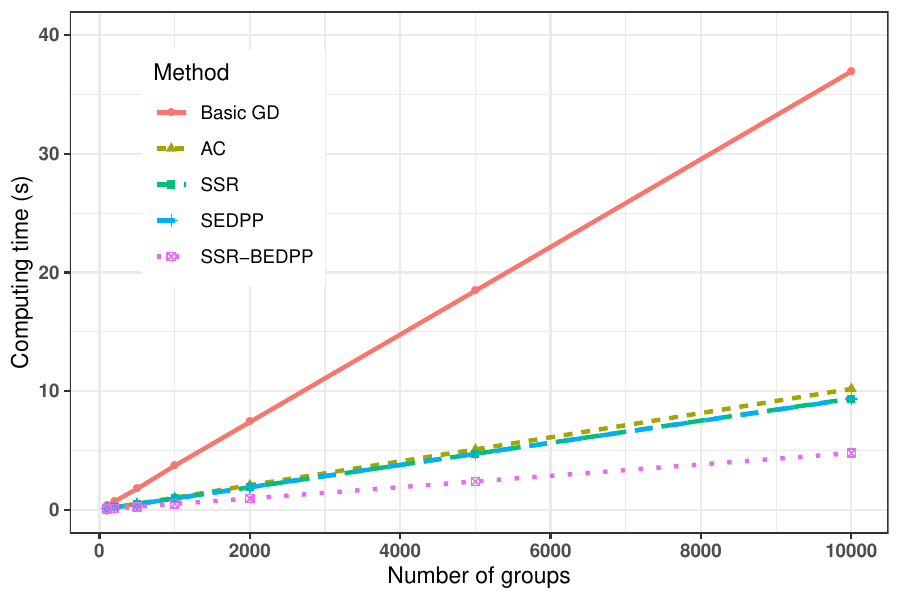}
%{2017-02-05_vary_ngrp_model_nac_2.png}
%\caption{Average computing time (in seconds) as a function of the number of groups for solving the group lasso along a sequence of 100 values of $\lambda$ equally spaced on the scale of $\lambda / \lambda_{max}$ from 0.1 to 1. Note again that the lines for SSR and SEDPP overlay.}
\caption{Average computing time as a function of the number of groups for solving the group lasso along a sequence of 100 values of $\lambda$. Note that the lines for SSR and SEDPP overlap and cannot be distinguished.}
\label{fig_vary_ngrp}
\end{figure}

\subsubsection{Real data}

We evaluate the performance of different rules using the following real data sets. 
\begin{enumerate}[(a)]
\item \textbf{Genetic rare variant study data (GRVS)}: this data set contains real exon sequencing data from the 1000 Genomes Project \cite{10002010map} on 697 subjects and 24,487 genetic variants. The genetic variants are grouped into 3205 genes (i.e., $n=697$, $p = 24,487$, and $G = 3,205$). 20 different response vectors containing the quantitative phenotypes are simulated according to a plausible genetic model of variant-disease association described in \cite{almasy2011genetic}.
\item \textbf{B-spline regression on GENE data (GENE-SPLINE)}: here we revisit the GENE data in Section~\ref{subsect_real_lasso} and fit a B-spline regression model using the group lasso. Specifically, 5-term basis expansions are first applied to each of the 17,322 features in GENE data, resulting in 86,610 new features in total. The 5 basis terms for each original feature are treated as forming a group. Therefore, $n=536$, $p = 86,610$ and $G = 17,322$.
\end{enumerate}

Table~\ref{tab_real_res_glasso} presents the average computing time and the speedup relative to Basic GD for solving the group lasso along a sequence of 100 values of $\lambda$ on the above two real data sets. SSR-BEDPP again outperforms other methods on the two real data sets with 6.3x and 33.4x speedup compared to Basic GD. In addition, it's around 1.4x faster than SSR and SEDPP, which again show similar performance and only small improvements over active cycling. Finally, SSR-BEDPP is over 1.5x faster than AC for GRVS data, and nearly 2x faster for GENE-SPLINE data.

\begin{table}[t] 
\caption{Average computing time (standard error) and the speedup relative to Basic GD for solving group lasso along a sequence of 100 values of $\lambda$ on real data sets.}
\centering
\begin{adjustbox}{max width=\linewidth, keepaspectratio, center}
\begin{tabular}{rcccc}
\toprule
\multirow{1}{*}{Method} & 
	\multicolumn{2}{c}{GRVS} & \multicolumn{2}{c}{GENE-SPLINE} \\
%	& \multicolumn{2}{c|}{$n=697$, $p=24,487$} & \multicolumn{2}{c}{$n=536$, $p=86,610$} \\
%	& \multicolumn{2}{c|}{$G=3,205$} & \multicolumn{2}{c}{$G=17,322$} \\
\midrule
  	& Time & Speedup & Time & Speedup \\
Basic GD & 15.84 (0.41) & 1.0 & 147.78 (1.21) & 1.0 \\
AC 	& 3.84 (0.08) & 4.1 & 8.19 (0.08) & 18.0 \\
SSR & 3.30 (0.11) & 4.8 & 6.34 (0.05) & 23.3 \\
SEDPP  & 3.51 (0.10) & 4.5 & 6.89 (0.05) & 21.4 \\
SSR-BEDPP & \textbf{2.51 (0.10)} & \textbf{6.3} & \textbf{4.42 (0.04)} & \textbf{33.4} \\
%\hline
%SSR-AC & 2.57 (0.05) & 6.2 & 5.57 (0.06) & 26.5 \\
%SEDPP-AC  & 2.65 (0.06) & 6.0 & 5.58 (0.05) & 26.5 \\
%SSR-BEDPP-AC & \textbf{1.87 (0.06)} & \textbf{8.5} & \textbf{3.55 (0.03)} & \textbf{41.6} \\
\bottomrule
\end{tabular}
\end{adjustbox}
\label{tab_real_res_glasso}
\end{table}

\section{Conclusion} \label{sect_conc}

In this paper, we propose novel, efficient hybrid safe-strong rules (HSSR) for lasso-type models. The key of HSSR is to incorporate a simple, safe rule into SSR to alleviate a large amount of unnecessary post-convergence KKT checking required by SSR. We demonstrate that this hybrid of two very different types of rules is substantially more efficient than either type alone. This innovative idea is motivated by the insights from careful complexity analysis. The idea is simple yet remarkably powerful in reducing the computing time for solving the entire regularization path in lasso-type problems. As a result, the newly proposed rules are much more scalable and suitable to large-scale sparse learning problems.

For the standard lasso problem, we develop two instances of HSSR: SSR-Dome and SSR-BEDPP. Moreover, we extend SSR-BEDPP to the elastic net and the group lasso to illustrate the generalizability of the HSSR framework. Extensive studies on synthetic and real data sets demonstrate that the newly proposed rules substantially outperform the existing state-of-the-art screening rules SSR and SEDPP.

The basic idea proposed in this manuscript is rather general and can be modified or extended in several ways.  Although we concentrated on coordinate descent throughout this manuscript, line 12 of Algorithm~\ref{algo_PCD_HSSR} could use any lasso solver -- nothing specific about coordinate descent is required in order for safe, strong, or hybrid rules to work.  More importantly, the hybrid screening idea can be extended to other sparse modeling problems.  The major limiting factor here is the existence of safe rules.  For many modeling problems, no safe rules have yet been identified.  Some notable exceptions are sparse logistic regression and sparse support vector machines, two classes of problems where we expect hybrid rules would also lead to more efficient algorithms.

One possibility for improving upon the hybrid rules proposed here would be to ``re-hybridize'' SSR with another safe rule once BEDPP is no longer effective. For example, as illustrated in Figure 1, for this data set BEDPP becomes useless at $\lambda_{60}$. At that point, we could apply the EDPP rule~\eqref{SEDPP_rule} to obtain a new safe rule, effective for $\lam_{61}, \lam_{62}, \ldots$.  by only varying $\lambda_{k+1}$. This rule would require $O(np)$ calculations at $\lam_{61}$, but only $O(p)$ calculations at future as the computationally expensive terms only need to be computed once and saved, as in the proposed algorithm.  This approach may offer additional computational savings beyond SSR-BEDPP, especially in the latter part of the solution path.

All screening rules presented in this manuscript are implemented in two publicly accessible R packages \texttt{biglasso} (for the standard lasso and elastic net) and \texttt{grpreg} (for the group lasso). Benchmarking experiments \citep{zeng2017} show that \texttt{biglasso} is considerably faster than existing packages of its kind, including the popular R package \texttt{glmnet}, as a result of the hybrid screening rules proposed here.

%\stopwc

\appendix
%Appendix A
\section*{Appendix}

\section{Proof of Theorem~\ref{theorem_bedpp}} \label{proof_bedpp_lasso}

\begin{proof}
Since at $\lambda_m$ the dual optimal solution is known: $\bt(\lambda_m) = \frac{\y}{n\lambda_m}$, Theorem 19 in in Wang et al.~\cite{JMLR:v16:wang15a} is applicable. Let $\v_1(\lambda_m) = \text{sign}(\x^T_* \y) \x_*, ~ \v_2(\lambda, \lambda_m) = \frac{\y}{n \lambda} - \frac{\y}{n \lambda_m}, ~ \v_2^\perp(\lambda, \lambda_m) = \v_2(\lambda, \lambda_m) - \frac{\langle \v_1(\lambda_m), \v_2(\lambda, \lambda_m) \rangle}{\| \v_1(\lambda_m) \|^2}\v_1(\lambda_m)$, then the BEDPP rule for the lasso~(\ref{linear_lasso}) (note our lasso formulation has a factor $1/n$) rejects the $j$th feature if 
$
\left|\x_j^T \left(\frac{\y}{n \lambda_m} + \frac{1}{2} \v_2^\perp(\lambda, \lambda_m) \right) \right| < 1 - \frac{1}{2} \|\v_2^\perp(\lambda, \lambda_m) \| \| \x_j \|.
$

Note that: (a) under conditions~(\ref{stand}) $\|\x_j\|=\sqrt{n}, ~ \forall j$; (b) $\x_*^T \y = \text{sign}(\x_*^T \y) n \lambda_m$. With some algebra, it's easy to show that $\v_2^\perp(\lambda, \lambda_m) =
\left(\frac{1}{n \lambda} - \frac{1}{n \lambda_m}\right) (\y - \text{sign}(\x_*^T \y) \lambda_m \x_*)$, and hence $\| \v_2^\perp(\lambda, \lambda_m) \|$ can be simplified as $\left(\frac{1}{n \lambda} - \frac{1}{n \lambda_m}\right) \sqrt{\y^T \y - n \lambda_m^2}$. Substituting these two pieces into the above inequality with some rearrangement yields to the simplified BEDPP. 
\end{proof}

\section{Proof of Theorem~\ref{theorem_sedpp}} \label{proof_sedpp_lasso}
\begin{proof}
In view of Corollary 20 in~\cite{JMLR:v16:wang15a}, for $k=0$ case, $\v_1(\lambda_k)$, $\v_2(\lambda_{k+1}, \lambda_k)$ and $\v_2^\perp(\lambda_{k+1}, \lambda_k)$ reduce to those in Appendix~\ref{proof_bedpp_lasso}. So the SEDPP rule becomes the BEDPP rule.

For $0 < k < K$, let $\v_1(\lambda_k)=\frac{\X \widehat{\bb}(\lambda_k)}{n \lambda_k}$, $\v_2(\lambda_{k+1}, \lambda_k) = \frac{\y}{n \lambda_{k+1}} - \frac{\y - \X \widehat{\bb}(\lambda_k)}{n \lambda_k}$, $\v_2^\perp(\lambda_{k+1}, \lambda_k) = \v_2(\lambda_{k+1}, \lambda_k) - \frac{\langle \v_1(\lambda_k), \v_2(\lambda_{k+1}, \lambda_k) \rangle}{\| \v_1(\lambda_k) \|^2}\v_1(\lambda_k)$. According to Corollary 20 in~\cite{JMLR:v16:wang15a}, the (sequential) EDPP rule for the lasso~(\ref{linear_lasso}) rejects the $j$th feature if 
$
\left|\x_j^T \left(\frac{\y - \X \widehat{\bb}(\lambda_k)}{n \lambda_m} + \frac{1}{2} \v_2^\perp(\lambda_{k+1}, \lambda_k) \right) \right| < 1 - \frac{1}{2} \|\v_2^\perp(\lambda_{k+1}, \lambda_k) \| \| \x_j \|.
$

Denote $c= \frac{\lambda_k - \lambda_{k+1}}{\lambda_k \lambda_{k+1}}$, $a = \y^T \X \widehat{\bb}(\lambda_k)$. We first note that $\v_2^\perp(\lambda_{k+1}, \lambda_k)$ can be simplified as:
\begin{align*}
  \v_2^\perp&(\lambda_{k+1}, \lambda_k) = \v_2(\lambda_{k+1}, \lambda_k) - \frac{\langle \v_1(\lambda_k), \v_2(\lambda_{k+1}, \lambda_k) \rangle}{\| \v_1(\lambda_k) \|^2}\v_1(\lambda_k)\\
  &= \frac{\y}{n \lambda_{k+1}} - \frac{\y - \X \widehat{\bb}(\lambda_k)}{n \lambda_k} - \\
  &\quad \frac{\widehat{\bb}(\lambda_k)^{T} \X^T \left((\lambda_k -  \lambda_{k+1}) \y + \lambda_{k+1}\X \widehat{\bb}(\lambda_k) \right) \X \widehat{\bb}(\lambda_k)}{n \lambda_k \lambda_{k+1} \|\X \widehat{\bb}(\lambda_k) \|^2}\\
  &= \frac{\y}{n \lambda_{k+1}} - \frac{\y - \X \widehat{\bb}(\lambda_k)}{n \lambda_k} - \frac{ac \X \widehat{\bb}(\lambda_k)}{n \|\X \widehat{\bb}(\lambda_k) \|^2} - \frac{ \X \widehat{\bb}(\lambda_k) }{n \lambda_k} \\
  &= \frac{c}{n} \left(\y - \frac{a \X \widehat{\bb}(\lambda_k)}{ \|\X \widehat{\bb}(\lambda_k) \|^2} \right).
\end{align*}
Then with some algebra, $\| \v_2^\perp(\lambda_{k+1}, \lambda_k) \|$ can be simplified to be $\frac{c}{n} \sqrt{ \|\y\|^2 - a^2 / \|\X \widehat{\bb} \|^2}$. Plugging the two terms back into the inequality of the SEDPP rule and with some rearrangement gives the simplified SEDPP rule and completes the proof.
\end{proof}

\section{Proof of Theorem~\ref{theorem_bedpp_elastic_net}}
\label{proof_bedpp_elnet}

\begin{proof}
Denote
$
\tilde{\X} = 
\begin{pmatrix}
\X \\
\sqrt{n(1-\alpha)\lambda} \cdot \I
\end{pmatrix}, ~ \tilde{\y} = 
\begin{pmatrix}
\y \\
\zero
\end{pmatrix}.
$
The the elastic net problem can then be rewritten as,
$$
\widehat{\bb}(\lambda, \alpha) = \argmin_{\bb \in \mathbb{R}^p}\frac{1}{2n} (\tilde{\y} - \tilde{\X} \bb)' (\tilde{\y} - \tilde{\X} \bb) + \alpha \lambda \|\bb\|_1,
$$
which is in the form of the standard lasso with original $\lambda$ reparameterized with $\alpha \lambda$. Hence Theorem 19 in~\cite{JMLR:v16:wang15a} is applicable, provided $(\X, \y, \lambda_m, \lambda)$ is replaced by $(\tilde{\X}, \tilde{\y}, \alpha \lambda_{m}, \alpha \lambda)$. That is, the BEDPP rule rejects the $j$th feature if
\begin{align} \label{rbedpp_enet}
\left|\tilde{\x}_j^T \left(\frac{\tilde{\y}}{n \alpha \lambda_m} + \frac{1}{2} \tilde{\v}_2^\perp(\lambda, \lambda_m) \right) \right| < 1 - \frac{1}{2} \|\tilde{\v}_2^\perp(\lambda, \lambda_m) \| \| \tilde{\x}_j \|.
\end{align}
Here $\lambda_{m}$ is reparameterized as $\lambda_{m} = \max_j | \tilde{\x}_j^T \tilde{\y} / (n \alpha) |$. $\tilde{\v}_2^\perp(\lambda, \lambda_m) = \tilde{\v}_2(\lambda, \lambda_m) - \frac{\langle \tilde{\v}_1(\lambda_m), \tilde{\v}_2(\lambda, \lambda_m) \rangle}{\| \tilde{\v}_1(\lambda_m) \|^2}\tilde{\v}_1(\lambda_m)$, where $\tilde{\v}_1(\lambda_m) = \text{sign}(\tilde{\x}^T_* \tilde{\y})\tilde{\x}_*, ~ \tilde{\v}_2(\lambda, \lambda_m) = \frac{\tilde{\y}}{n \alpha \lambda} - \frac{\tilde{\y}}{n \alpha \lambda_m}$.

On the other hand, it's easy to verify that $\tilde{\x}_j^T \tilde{\y} = \x_j^T \y, ~ \forall j; ~ \| \tilde{\y} \| = \| \y \|$; $ \| \tilde{\x}_j \|^2 = \| \x_j \|^2 + n \lambda(1 - \alpha);~ \tilde{\x}_j^T \tilde{\x}_k = \x_j^T \x_k, \forall j \neq k$. With some algebra, $\tilde{\v}_2^\perp(\lambda, \lambda_m)$ can be simplified as follows,
\begin{align*}
  \tilde{\v}_2^\perp(\lambda, \lambda_m) &= \frac{\tilde{\y}}{n \alpha \lambda} - \frac{\tilde{\y}}{n \alpha \lambda_m} - \\
  &\quad \frac{\text{sign}(\x_*^T \y) \x_*^T \y}{\| \tilde{\x}_* \|^2} \left(\frac{1}{n \alpha \lambda} - \frac{1}{n \alpha \lambda_m}\right) \text{sign}(\x_*^T \y) \tilde{\x}_* \\
  &= \tilde{\y} \left(\frac{1}{n \alpha \lambda} - \frac{1}{n \alpha \lambda_m}\right) -\\
  &\quad \left(\frac{1}{n \alpha \lambda} - \frac{1}{n \alpha \lambda_m}\right)  \frac{\x_*^T \y}{n (1 + \lambda (1 - \alpha))} \tilde{\x}_* \\
%&= \tilde{\y} \left(\frac{1}{n \alpha \lambda} - \frac{1}{n \alpha \lambda_m}\right) -\left(\frac{1}{n \alpha \lambda} - \frac{1}{n \alpha \lambda_m}\right)  \frac{\text{sign}(\x_*^T \y) n \alpha \lambda_m}{n \left(1 + \lambda (1 - \alpha) \right)} \tilde{\x}_*\\
&= \left(\frac{1}{n \alpha \lambda} - \frac{1}{n \alpha \lambda_m}\right) \left(\tilde{\y} - \frac{\text{sign}(\x_*^T \y) \alpha \lambda_m}{1 + \lambda (1 - \alpha)} \tilde{\x}_* \right).
\end{align*}
From this, we have the result $\| \tilde{\v}_2^\perp(\lambda, \lambda_m) \| = \left(\frac{1}{n \alpha \lambda} - \frac{1}{n \alpha \lambda_m}\right)  \sqrt{\| \y \|^2 - \frac{n \alpha^2 \lambda_m^2}{1 + \lambda (1 - \alpha)} }$.  Let us now consider two cases:

\begin{itemize}
\item If $\tilde{\x}_j = \tilde{\x}_*$, $\tilde{\x}_j^T\tilde{\x}_* n (1 + \lambda (1 - \alpha))$, it can be shown that
\begin{align*}
  &\left|\tilde{\x}_j^T \left(\frac{\tilde{\y}}{n \alpha \lambda_m} + \frac{1}{2} \tilde{\v}_2^\perp(\lambda, \lambda_m) \right) \right|\\
  &\qquad=\frac{1}{2n \alpha \lambda \lambda_m} \left| (\lambda + \lambda_m) \x_*^T \y - (\lambda_m - \lambda) \text{sign}(\x_*^T \y) n \alpha \lambda_m \right| \\
  &\qquad= 1,
\end{align*}
which is always larger than the RHS of~\eqref{rbedpp_enet}. In other words, $\x_*$ won't be rejected.

\item If $\tilde{\x}_j \neq \tilde{\x}_*$, $\tilde{\x}_j^T\tilde{\x}_* = \x_j^T \x_*$. We have
\begin{multline*}
  \left|\tilde{\x}_j^T \left(\frac{\tilde{\y}}{n \alpha \lambda_m} + \frac{1}{2} \tilde{\v}_2^\perp(\lambda, \lambda_m) \right) \right| = \frac{1}{2n \alpha \lambda \lambda_m} \times \\
  \left| (\lambda + \lambda_m) \x_j^T \y - (\lambda_m - \lambda) \frac{\text{sign}(\x_*^T \y) \alpha \lambda_m}{1 + \lambda (1 - \alpha)} \x_j^T \x_*\right|.
\end{multline*}
\end{itemize}

Plugging this piece and the simplified $\| \tilde{\v}_2^\perp(\lambda, \lambda_m) \|$ into~\eqref{rbedpp_enet} with some additional algebra yields to the BEDPP rule for the elastic net~\eqref{BEDPP_enet}.

%\begin{enumerate}[(a)]
%\item If $\x_j \neq \x_*$, we have,
%\begin{align*}
%\begin{split}
%& |\alpha (\lambda_m + \lambda) \x_j^T \y - \alpha (\lambda_m - \lambda)\text{sign}(\x_*^T \y) \alpha \lambda_m \x_j^T \x_* | \\
%& < 2n \alpha^2 \lambda \lambda_m - \alpha (\lambda_m - \lambda) \sqrt{n \|\y\|^2 - n^2 \alpha^2 \lambda_m^2} \\
%\end{split}
%\end{align*}
%\item If $\x_j = \x_*$, we have,
%\begin{align*}
%\begin{split}
%& |\alpha (\lambda_m + \lambda) \x_j^T \y - \alpha (\lambda_m - \lambda)\text{sign}(\x_*^T \y) \alpha \lambda_m n(1 + \lambda (1 - \alpha)) | \\
%& < 2n \alpha^2 \lambda \lambda_m - \alpha (\lambda_m - \lambda) \sqrt{n \|\y\|^2 - n^2 \alpha^2 \lambda_m^2} \\
%\end{split}
%\end{align*}
%\end{enumerate}

\end{proof}

\section{Proof of Theorem~\ref{theorem_BEDPP_glasso}}
\label{proof_bedpp_glasso}

\begin{proof}

We first note that it can be easily shown the dual optimal solution to the group lasso problem~(\ref{linear_glasso}) at $\lambda_m$ is $\bt^*_{\lambda_m} = \frac{\y}{n \lambda_m}$. Denote $\bar{\v} = \X_* \X_*^T \y, ~ \bar{\v}_2(\lambda, \lambda_m) = \frac{\y}{n \lambda} - \bt^*_{\lambda_m}, ~ \bar{\v}_2^\perp(\lambda, \lambda_m)  = \bar{\v}_2(\lambda, \lambda_m) - \frac{\langle \bar{\v}, \bar{\v}_2(\lambda, \lambda_m) \rangle}{\| \bar{\v} \|^2} \bar{\v}$. According to Theorem 20 in Wang et al.~\cite{JMLR:v16:wang15a}, for any $\lambda \in (0, \lambda_m]$, we have the BEDPP rule that rejects the $g$th group of features (i.e., $\widehat{\bb}_g(\lambda) = \zero)$ if,
\begin{align} \label{bedpp_glasso}
\left\| \X_g^T \left( \bt^*_{\lambda_m} + \frac{1}{2} \bar{\v}_2^\perp(\lambda, \lambda_m) \right) \right\| < \sqrt{W_g} - \frac{1}{2} \| \bar{\v}_2^\perp(\lambda, \lambda_m)\| \|\X_g \|.
\end{align}

Note $\bar{\v}_2^\perp(\lambda, \lambda_m)$ can be simplified as follows.
\begin{align*}
& \quad \bar{\v}_2^\perp(\lambda, \lambda_m) = \bar{\v}_2(\lambda, \lambda_m) - \frac{\langle \bar{\v}, \bar{\v}_2(\lambda, \lambda_m) \rangle}{\| \bar{\v} \|^2} \bar{\v} \\
& = \frac{\y}{n} (\frac{1}{\lambda} - \frac{1}{\lambda_m}) - \frac{\y^T \X_* \X_*^T \y \X_* \X_*^T \y}{\y^T \X_* \X_*^T \X_* \X_*^T \y} \frac{1}{n} (\frac{1}{\lambda} - \frac{1}{\lambda_m}) \\
& = \frac{\y}{n} (\frac{1}{\lambda} - \frac{1}{\lambda_m}) - \frac{\y^T \X_* \X_*^T \y \X_* \X_*^T \y}{\y^T \X_* n \X_*^T \y} \frac{1}{n} (\frac{1}{\lambda} - \frac{1}{\lambda_m}) \\
& = \frac{\y}{n} (\frac{1}{\lambda} - \frac{1}{\lambda_m}) - \frac{ \X_* \X_*^T \y}{n} \frac{1}{n} (\frac{1}{\lambda} - \frac{1}{\lambda_m}) \quad (\y^T \X_* \X_*^T \y \text{ is a scalar}) \\
& = \frac{1}{n} (\frac{1}{\lambda} - \frac{1}{\lambda_m}) (\I - \frac{\X_* \X_*^T}{n}) \y,
\end{align*}
where the second equality is because $\X_*^T \X_* = n \I$ under the condition~(\ref{stand_glasso}). The left hand side of the rule then becomes,
\begin{align*}
& \quad \left\| \X_g^T \left( \bt^*(\lambda_m) + \frac{1}{2} \bar{\v}_2^\perp(\lambda, \lambda_m) \right) \right\| \\
& = \left\| \X_g^T \left( \frac{\y}{n \lambda_m} + \frac{1}{2n} (\frac{1}{\lambda} - \frac{1}{\lambda_m}) \left(\I - \frac{\X_* \X_*^T}{n} \right) \y \right) \right\| \\
& = \left\| \frac{\X_g^T \y}{2n} (\frac{1}{\lambda} + \frac{1}{\lambda_m}) - \frac{1}{2n} (\frac{1}{\lambda} - \frac{1}{\lambda_m}) \frac{\X_g^T \bar{\v}}{n} \right\| \\
  & = \frac{1}{2n \lambda \lambda_m} \left\| (\lambda + \lambda_m) \X_g^T \y - (\lambda_m - \lambda) \frac{\X_g^T \bar{\v}}{n} \right\| \\
  & = \frac{1}{2n \lambda \lambda_m} \sqrt{A-B+C}
%%& = \frac{1}{2n \lambda \lambda_m} \sqrt{(\lambda + \lambda_m)^2 \| \X_g^T \y\|^2 - \frac{2 (\lambda_m^2 - \lambda^2) \y^T \X_g \X_g^T \bar{\v}}{n} + \frac{(\lambda_m - \lambda)^2 \| \X_g^T \bar{\v} \|^2}{n^2}}
\end{align*}

The right hand side of the rule is,
\begin{align*}
& \quad \sqrt{W_g} - \frac{1}{2} \| \bar{\v}_2^\perp(\lambda, \lambda_m)\| \|\X_g \| \\
& = \sqrt{W_g} - \frac{1}{2n} (\frac{1}{\lambda} - \frac{1}{\lambda_m}) \| \X_g \| \sqrt{\y^T \left(\I - \frac{\X_* \X_*^T}{n} \right)^T \left(\I - \frac{\X_* \X_*^T}{n} \right) \y} \\
& = \sqrt{W_g} - \frac{1}{2n} (\frac{1}{\lambda} - \frac{1}{\lambda_m}) \| \X_g \| \sqrt{\y^T \left(\I - \frac{1}{n} \X_* \X_*^T \right) \y} \\
%& = \sqrt{W_g} - \frac{1}{2n} (\frac{1}{\lambda} - \frac{1}{\lambda_m}) \| \X_g \| \sqrt{\y^T \y - \frac{1}{n} \| \X_*^T \y \|^2 } \\
& = \sqrt{W_g} - \frac{1}{2n} (\frac{1}{\lambda} - \frac{1}{\lambda_m}) \| \X_g \| \sqrt{\| \y \|^2- n \lambda_m^2 W_* }
%& = \sqrt{W_g} - \frac{1}{2n} (\frac{1}{\lambda} - \frac{1}{\lambda_m}) \sqrt{ n \| \y \|^2- n^2 \lambda_m^2 W_* }
\end{align*}
Note that the second equality is due to that $\I - \X_* \X_*^T /n$ is idempotent; the last equality is because: (i) $\| \X_*^T \y \| = n \sqrt{W_*} \lambda_m$, implied by the definitions of $\lambda_m$ and $\X_*$; (ii) $\| \X_g \| = n$ , again implied by the standardization condition~(\ref{stand_glasso}). Here $\|\X_g\|$ is the matrix $L_2$ norm, which is equal to the largest singular value of $\X_g$.

Substituting the simplified results into~(\ref{bedpp_glasso}) with some rearrangement yields the BEDPP rule for the group lasso.

\end{proof}

\section{Effect of correlation on hybrid rules}
\label{App:Cor}

To illustrate the effect of correlation on performance, we carried out a simulation study similar to one in Section \ref{subsect_sim_lasso}, except that: 1) the following are fixed: $n=50,000$, $p=1,000$, and 20 true features; 2) the pairwise correlation between features varies in magnitude from 0 to 0.1, 0.3, 0.5, and 0.8, all with an exchangeable (compound symmetric) structure).  The results are shown in Table \ref{Tab:Cor}.

\begin{table}[h!]
\caption{Average computing time (standard error) for solving the lasso along a sequence of 100 values of $\lambda$ for increasing amounts of pairwise correlation.\label{Tab:Cor}}
\centering
%\begin{adjustbox}{max width=\linewidth, keepaspectratio, center}
\begin{tabular}{@{}lrrrrr@{}}
\toprule
& 0 & 0.1 & 0.3 & 0.5 & 0.8\tabularnewline
\midrule
Basic PCD & 10.92 (0.23) & 19.63 (0.44) & 28.81 (1.48) & 49.35 (3.51) &
72.21 (5.71)\tabularnewline
AC & 6.15 (0.05) & 6.94 (0.09) & 7.28 (0.20) & 8.35 (0.43) & 8.17
(0.52)\tabularnewline
SSR & 5.37 (0.03) & 6.08 (0.04) & 6.44 (0.18) & 7.59 (0.40) & 9.12
(1.52)\tabularnewline
SEDPP & 5.51 (0.04) & 6.27 (0.06) & 6.62 (0.19) & 7.69 (0.41) & 7.91
(0.49)\tabularnewline
SSR-Dome & 4.03 (0.04) & 4.93 (0.09) & 5.40 (0.20) & 6.90 (0.40) & 8.62
(1.57)\tabularnewline
SSR-BEDPP & 3.22 (0.05) & 4.02 (0.11) & 4.62 (0.21) & 6.11 (0.41) & 8.05
(1.65)\tabularnewline
\bottomrule
\end{tabular}
\end{table}

As the table shows, all algorithms become slower as correlation increases, although some are affected more than others.  The proposed hybrid rules are roughly 30-70\% faster than either strong (SSR) or safe (SEDDP) rules alone when correlation is less than or equal to 0.5.  However, strong rules do become ineffective in the presence of extreme correlation, which thus renders hybrid rules ineffective as well.  Nevertheless, the real data results of Section~\ref{subsect_real_lasso} suggest that hybrid rules are effective for most realistic correlation structures.

\bibliographystyle{ims-nourl}

\end{document}